%% file: neurips_2026.tex
\documentclass{article}

\PassOptionsToPackage{numbers,compress}{natbib}
 \usepackage[preprint]{neurips_2026}

\usepackage[utf8]{inputenc} 
\usepackage[T1]{fontenc}    
\usepackage{hyperref}       
\usepackage{url}            
\usepackage{booktabs}       
\usepackage{amsfonts}       
\usepackage{nicefrac}       
\usepackage{microtype}      
\usepackage{xcolor}         

\usepackage{amsmath}
\usepackage{amssymb}
\usepackage{mathtools}
\usepackage{amsthm}

\usepackage{bbding}
\usepackage{hyperref}
\usepackage{url}
\usepackage{wrapfig}
\usepackage{graphicx}
\usepackage{booktabs}
\usepackage{diagbox}
\usepackage{multirow}
\usepackage{subcaption}
\usepackage{caption}
\usepackage{makecell}

\usepackage{algorithm}
\usepackage{algorithmic}

\theoremstyle{plain}
\newtheorem{theorem}{Theorem}[section]

\theoremstyle{definition}
\newtheorem{definition}[theorem]{Definition}
\newtheorem{assumption}[theorem]{Assumption}
\theoremstyle{remark}

\newcommand{\ie}{\emph{i.e.,} }

\title{Time Series Causal Discovery via Context-Conditioned and Causality-Augmented Pretraining}

\author{
  Biao Ouyang, \enspace 
  Tengxue Zhang,  \enspace
  Zhihao Zhuang,  \enspace
  Yang Shu\thanks{Corresponding author.},  \enspace
  Chenjuan Guo,  \enspace
  Bin Yang \\
  East China Normal University \\
  \texttt{\{bouyang, txzhang, zhuangzhihao\}@stu.ecnu.edu.cn} \\
  \texttt{\{yshu, cjguo, byang\}@dase.ecnu.edu.cn}
}

\begin{document}

\maketitle

\input{src/abstract}

\input{src/intro}

\input{src/relatedwork}

\input{src/problem}

\input{src/method}

\input{src/experiments}

\input{src/conclusions}




\medskip
{
\small
\bibliographystyle{plainnat}
\bibliography{references}

}

\newpage

\appendix

\input{src/appendix}


\end{document}

%% file: src/abstract.tex
\begin{abstract}
Causal discovery from time series is critical for many real-world applications, such as tracing the root causes of anomalies. Existing approaches typically rely on dataset-specific optimization, making it difficult to transfer their causal discovery capabilities to new time series governed by diverse causal mechanisms. 
In this paper, we propose \textbf{PTCD}, a novel \textbf{P}retraining framework for \textbf{T}ime-series \textbf{C}ausal \textbf{D}iscovery, which improves cross-task generalization through context-conditioned modeling and transferable causal augmentation. 
To model complex temporal causal dependencies, PTCD employs a dual-scale iterative attention mechanism to capture window-level causal relationships, and a Gaussian mixture with a context-level routing mechanism to handle heterogeneous exogenous distributions. 
To further address distribution shifts across causal graphs, PTCD adopts a pretraining paradigm on synthetic datasets that integrates intervention-based learning and a causal mixup strategy, promoting stable causal discovery and stronger generalization. Extensive experiments on multiple real-world out-of-distribution (OOD) datasets demonstrate that PTCD excels in both causal discovery and root cause identification.
\end{abstract}

%% file: src/intro.tex
\section{Introduction}
Accurate causal discovery from observed time series is fundamental to many real-world applications. Causal analysis can reveal the drivers of stock price fluctuations~\citep{DBLP:conf/nips/stock} and the factors influencing river water levels~\citep{DBLP:conf/iclr/causalriver}. It also facilitates identifying the sources of system failures and tracing how these failures propagate~\citep{DBLP:conf/iclr/aerca,DBLP:conf/iclr/IDI}, thereby enabling actionable insights and more robust decision-making.

As shown in Figure~\ref{fig:intro_pretrain}, recent end-to-end causal discovery methods generalize poorly across different causal mechanisms and require costly per-dataset optimization, especially for time series with complex temporal dependencies. 
We therefore propose a pretraining paradigm that learns transferable causal discovery capabilities from diverse time series and causal graphs, enabling stronger generalization to new time series and supports accurate causal graph recovery in both zero-shot settings and with lightweight fine-tuning.
However, building a generalizable pretraining framework for time series causal discovery faces two main challenges.

\textbf{Challenge 1:} \textit{Inherent complexity of temporal causal dependencies}. 
As shown in Figure~\ref{fig:intro_window}, channel 3 at time $t$ is influenced by channel 1 at $t-2$, which in turn is influenced by channel 2 at $t-4$. This reveals an iterative, cascading dependency across time steps. 
Existing methods~\citep{DBLP:conf/iclr/cuts,DBLP:conf/iclr/aerca} inadequately capture such iterative cross-window interactions.
This suggests that reliable temporal causal discovery requires modeling dependencies beyond individual windows.
Furthermore, in structural causal models, observed time series (endogenous variables) are influenced not only by other endogenous variables but also by unobserved factors (exogenous variables). Since exogenous distributions vary across time series and causal structures, existing methods~\citep{DBLP:conf/aistats/Dynotears,DBLP:conf/iclr/aerca} assuming a fixed Gaussian prior lack the flexibility to model diverse causal mechanisms.

\begin{wrapfigure}[33]{r}{0.48\columnwidth}
    \centering

    \begin{subfigure}[b]{0.98\linewidth}
        \centering
        \includegraphics[width=\linewidth]{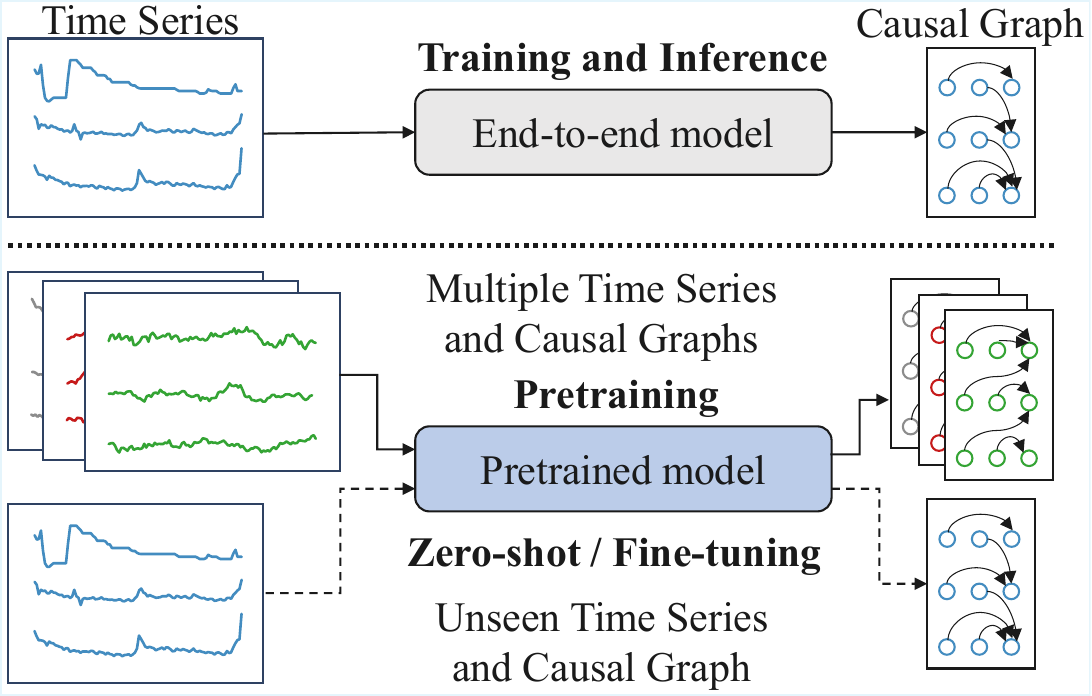}
        \caption{End-to-end method v.s. Pretraining method.}
        
        \label{fig:intro_pretrain}
    \end{subfigure}
    \hfill
    \begin{subfigure}[b]{0.98\linewidth}
        \centering
        \includegraphics[width=\linewidth]{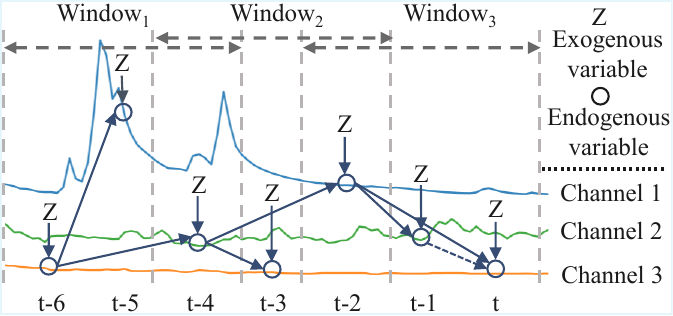} 
        \caption{Windowed causal dependencies.}
        \label{fig:intro_window}
    \end{subfigure}

    \caption{\textbf{(a)} End-to-end vs.\ pretraining paradigm: end-to-end models adopt dataset-specific optimization, while pretrained models learn from multiple time series and causal graphs, enabling zero-shot inference or fine-tuning on unseen time series/graphs. \textbf{(b)} Intra- and inter-window causal dependencies in multivariate time series $x$. Solid lines denote causal dependencies, while dashed lines indicate spurious correlations.}
    \label{fig:intro}
\end{wrapfigure}

\textbf{Challenge 2:} \textit{Lack of causality-aware generalization under distribution shifts}. 
Real-world time series often undergo distribution shifts caused by changing environments or system dynamics, leading to out-of-distribution (OOD) scenarios. Therefore, a robust causal discovery model should generalize beyond the training distribution.
However, existing methods fail to account for both the diversity of individual series and the variability of causal structures across different series.
Additionally, in causal discovery, distinguishing between correlation and causation is crucial~\citep{PearlMackenzie2018}. 
As shown in Figure~\ref{fig:intro_window}, both channel 3 at time $t$ ($x_{3,t}$) and channel 2 at $t-1$ ($x_{2,t-1}$) are influenced by channel 1 at $t-2$, which induces correlation between them. However, this does not imply a direct causal link from $x_{2,t-1}$ and $x_{3,t}$. What truly matters is determining whether a causal relationship exists. Unfortunately, existing deep learning methods struggle to transfer this causal reasoning capability to unseen time series and causal graphs.

For \textbf{Challenge 1,} we propose a \textit{hierarchical context-conditioned temporal causal discovery modeling approach}, which learns window-level representations at multiple scales and captures the distributions of exogenous variables in time series. Our framework features two key components: 
(1) \textit{Dual-scale iterative representation enhancement}, which employs an attention-based mechanism to iteratively model both intra-window and inter-window dependencies, capturing fine-grained causal dynamics within windows while accounting for coarse-grained long-range causal propagation across windows; and (2) \textit{Context-conditioned exogenous variable estimation}, which employs an adaptive mixture-of-Gaussians model with routing to approximate exogenous variable distributions in a context-conditioned manner, thereby enhancing the robustness of causal discovery across diverse time series.

For \textbf{Challenge 2,} we propose \textit{generalizable causal learning with causal augmentation} to strengthen the model's perception of causal relationships and improve its OOD robustness.
Specifically, the \textit{causal pretext task} introduces interventions into time series, breaking spurious correlations and providing strong signals for learning more generalizable causal structures. By treating the prediction of the intervened window as a pretext task, the model is encouraged to exploit discrepancies across varying time series and acquire transferable causal discovery capabilities.   
Furthermore, \textit{Time series causal mixup} operates by jointly mixing latent representations and their associated causal graphs, generating augmented sequences with diverse causal strengths and dependency patterns. This not only enriches the training data with diverse causal scenarios but also compels the model to generalize beyond previously observed correlation structures.  
In summary, our contributions are as follows:
\begin{itemize}
     \item We propose a novel pretraining paradigm for time-series causal discovery that can rapidly adapt to new time series and causal graphs.
    \item We introduce a hierarchical context-conditioned temporal causal discovery modeling that captures both inter-window and intra-window dependencies in individual time series, while flexibly handling context-conditioned exogenous noises across multiple series.
    \item We develop a generalizable causal pretraining strategy that employs intervention-based pretext tasks to break spurious correlations and reveal causal invariance in time series, alongside a causal mixup to achieve smoother gradients and improved robustness.
    \item Experiments on multiple real-world datasets show that PTCD consistently outperforms existing methods in both causal discovery and root cause identification.
\end{itemize}

%% file: src/relatedwork.tex
\section{Related Work}

\begin{wraptable}{r}{0.5\textwidth}  
  \centering
\vspace{-1em}
  
  \caption{Comparison of causal discovery methods. The symbol $\checkmark$ indicates that a component is incorporated, while $\times$ denotes its absence. Fine-grained and coarse-grained denote intra- and inter-window dependency, respectively. Exogenous refers to modeling of exogenous variables, and Generalization indicates the ability to handle unseen time series under different causal mechanisms.} 
\resizebox{0.48\textwidth}{!}{
\begin{tabular}{cccccc}
\toprule
Methods & Fine-grained & Coarse-grained & Exogenous & Generalization \\
\midrule
CDMI~\citep{CDMI} &      $\checkmark$        &        $\times$        &       $\times$         &         $\times$       \\
CUTS~\citep{DBLP:conf/iclr/cuts} &      $\checkmark$        &        $\times$        &       $\times$     &                  $\times$       \\
TCDF~\citep{DBLP:journals/make/tcdf} &      $\checkmark$        &        $\checkmark$        &       $\times$        &         $\times$       \\
CP~\citep{DBLP:journals/corr/cp}      &      $\times$        &        $\checkmark$        &       $\times$     &              $\checkmark$       \\
AERCA~\citep{DBLP:conf/iclr/aerca}   &       $\checkmark$       &         $\times$       &        $\checkmark$   &            $\times$          \\
\midrule
PTCD   &      $\checkmark$        &        $\checkmark$         &       $\checkmark$              &    $\checkmark$    \\
\bottomrule
\end{tabular}
}
\label{tab:relatedwork}
\end{wraptable}

\textbf{Causal Discovery for Time Series.} 
Granger causality-based methods~\citep{DBLP:journals/make/grange,DBLP:journals/make/grange2} assume that if past values of $X$ improve the prediction of future values of $Y$, then $X$ is a Granger cause of $Y$. 
This idea has been extended with various neural architectures: cLSTM~\citep{cMLP} leverages RNNs to infer Granger causal structures; TCDF~\citep{DBLP:journals/make/tcdf} utilizes attention-based CNNs for interpretable causal discovery;
GVAR~\citep{DBLP:conf/iclr/GVAR} adopts a vector autoregressive model with generalized coefficient matrices to increase modeling flexibility;
and CUTS~\citep{DBLP:conf/iclr/cuts,cuts+} learns a sparse causal adjacency matrix directly from data.
However, these methods share a key limitation: they do not explicitly model endogenous errors or exogenous noise, which restricts their applicability in practical scenarios. 
In contrast, Structural Causal Model (SCM) approaches explicitly characterize functional relationships among endogenous variables and account for exogenous influence: Varlingam~\citep{DBLP:journals/jmlr/Varlingam} combines a non-Gaussian instantaneous causality with vector autoregressive dynamics; 
TiMINo~\citep{DBLP:conf/nips/TiMINo} assumes that exogenous variables are independent over time; AERCA~\citep{DBLP:conf/iclr/aerca} uses an autoencoder to model both causal relationships and the distribution of exogenous variables. 
However, existing methods largely ignore distribution shifts, limiting their ability to generalize time series causal discovery across multiple datasets.
To address these gaps, we leverage a causal pretext task to uncover causal relationships in temporal sequences and causal mixup to generate diverse dependency patterns, thereby improving OOD generalization for causal discovery.

\textbf{Pre-trained Causal Discovery.}
Recent efforts have integrated causal discovery into pre-trained models~\citep{avici,DBLP:conf/iclr/CSIVA}. CaML~\citep{DBLP:conf/nips/CaML} formulates personalized effect prediction as a meta-task for zero-shot generalization; Cond-FiP~\citep{DBLP:journals/corr/Cond-FiP} dynamically infers structural causal models via a Fixed-Point Approach~\citep{DBLP:journals/corr/Fip}; and CInA~\citep{DBLP:conf/icml/CInA} learns transferable causal representations from unlabeled data, enabling zero-shot inference without fine-tuning. 
However, these methods do not explicitly capture temporal causality. CP~\citep{DBLP:journals/corr/cp} mitigates this by learning window-based causal graphs via supervised training across four architectures, yet it overlooks key temporal properties such as multi-scale dependencies and exogenous variable effects (see Table~\ref{tab:relatedwork}).
To overcome these limitations, we leverage dual-scale attention to capture inter-window and intra-window temporal dependencies, and context-conditioned exogenous estimation to model the distribution of exogenous variables in each time series.

%% file: src/problem.tex
\section{Problem Formulation}
We consider a multivariate time series $\boldsymbol{x} = \{x_{1:T, i}\}_{i=1}^C$ with  $C$ channels, where $x_{1:T, i}$ denotes the length-$T$ sequence for channel $i$. Following AERCA~\citep{DBLP:conf/iclr/aerca}, we adopt the Granger Causality framework and assume causal sufficiency, 
no instantaneous effects, and time-invariant causal relations. For structural identifiability, we assume that the system satisfies the causal minimality~\citep{DBLP:conf/icml/MCD} and follows a continuous nonlinear Additive Noise Model~\citep{ANM}. Under these assumptions, the data generation process is modeled as a time-invariant structural causal model (SCM):
\begin{equation}
\label{equ:def}
    x_{t,i} = g_i \left( \sum_{k=0}^{n-1} \sum_{j=1}^C G_{t-n+k,j,i} \cdot f_{k,j,i}(x_{t-n+k,j})\right) + Z_{i,t},
\end{equation} 
where $n$ is the maximum time lag, $\boldsymbol{G} \in \mathbb{R}^{n \times C \times C}$ is the causal graph, and $G_{t-n+k,j,i}$ indicates the impact of channel $j$ at time $t-n+k$ on $x_{t,i}$. The operator $\cdot$ denotes scalar multiplication, and the functions $g_i(\cdot)$ and $f_{k,j,i}(\cdot)$ are nonlinear transformations of the past observations, and $Z_{i,t}$ is an exogenous noise term for channel $i$ at time $t$. 

In this work, we focus on pretraining a causal discovery model $G=F_{\theta}(\boldsymbol{x})$ that acquires transferable causal discovery capability. The learned model is expected to generalize to unseen time series and enable the discovery of causal graphs that are not observed during pretraining. Moreover, we further transfer the pretrained model to real-world settings for causal discovery and root cause identification.

%% file: src/method.tex
\section{Methodology}
\begin{figure*}[t]
  \centering
  \includegraphics[width=0.95\linewidth]{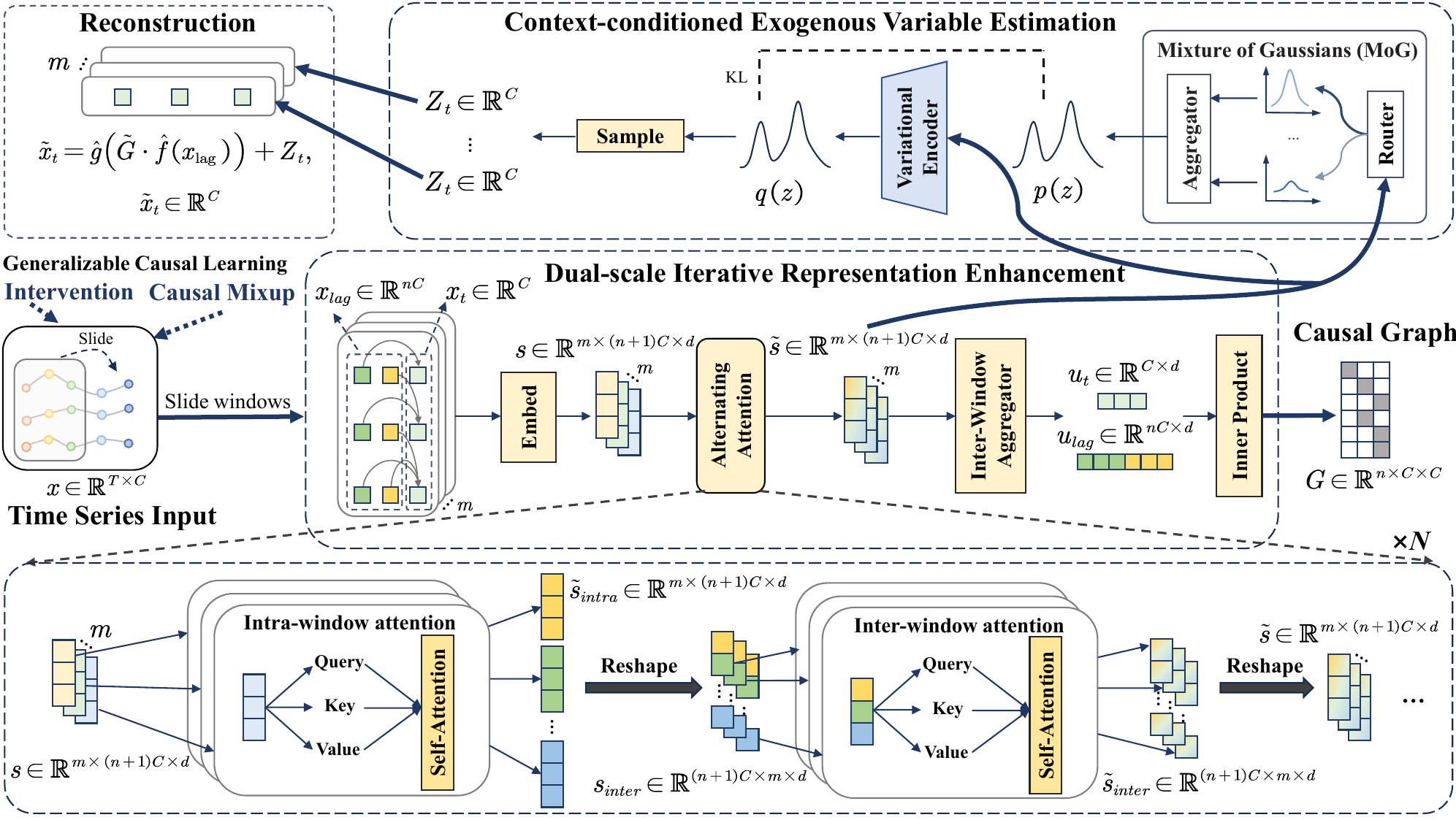}
  \caption{Overview of the hierarchical context-conditional temporal causal discovery framework, comprising: (1) Dual-scale Iterative Representation Enhancement, and (2) Context-conditioned exogenous variable estimation. The model's generalization is further improved via transferable causality-augmented pretraining, incorporating intervention pretext task and causal mixup strategies.
 }
 
  \label{fig:framework}
  \vspace{-1em}
\end{figure*}

We propose \textbf{PTCD}, a generalizable framework for time series causal discovery with context-conditioned modeling and causality-augmented pretraining (Figure~\ref{fig:framework}).
To capture complex temporal dependencies and diverse exogenous influences, we introduce a \textbf{hierarchical context-conditioned causal discovery framework}. 
The dual-scale iterative representation enhancement module segments time series into sliding windows and alternates intra-window and inter-window attention to refine temporal representations. An inter-window aggregator then produces past and current node embeddings, whose inner products estimate the window causal graph. To model heterogeneous exogenous variables, the context-conditioned exogenous variable estimation module adopts a routing-based mixture-of-Gaussians prior and reconstructs the current series from the estimated causal graph and past observations. 
To improve generalization, we introduce \textbf{transferable causal learning with causal augmentation}. The intervention pretext task performs interventions on the generative process of time series and formulates a prediction task for the intervened window, thereby eliminating spurious correlations and enabling stable causal discovery across different environments. causal mixup regularizes causal dependencies by jointly interpolating latent representations and causal graphs.
Finally, PTCD reconstructs time series by jointly modeling causal structure, endogenous dynamics, and exogenous factors, enabling interpretable and transferable causal discovery.

\subsection{Hierarchical Context-Conditional Temporal Causal Discovery Framework}

\textbf{Dual-scale Iterative Representation Enhancement.}
To model complex causal dependencies over both short and long time scales, as shown in Figure~\ref{fig:framework}, given a multivariate time series $x \in \mathbb{R}^{T \times C}$, we divide it into $m$ windows of size $(n+1)$, yielding $x_{window} \in \mathbb{R}^{m \times (n+1)C}$. To model the data generation process of the current time step $x_t$, we represent each window as the concatenation of past observations $x_{lag} \in \mathbb{R}^{nC}$ and the current step $x_t \in \mathbb{R}^{C}$. We then embed $x_{window}$ to obtain the window representation $s_{intra}\in \mathbb{R}^{m \times (n+1)C \times d}$, where $d$ represents the dimension of the embedding. Intra-window features capture local temporal dynamics, while inter-window relations provide global context. To jointly model these complementary scales, we propose an alternating attention mechanism that iteratively integrates features within and across windows.
First, we apply self-attention within each window. The sequence $s$ is projected into $Q^s_{intra}$, $K^s_{intra}$, and $V^s_{intra}$, and compute the intra-window representation as:
\begin{equation}
    \tilde{s}_{intra} = \text{Softmax}\!\left(Q^s_{intra}(K^s_{intra})^\top/{\sqrt{d_k}}\right) V^s_{intra}.
\end{equation}
Next, we perform inter-window self-attention to capture dependencies across windows. Specifically, the $\tilde{s}_{intra} \in \mathbb{R}^{m \times (n+1)C \times d}$ is reshaped as $s_{inter} \in \mathbb{R}^{(n+1)C \times m \times d}$, projected into $Q^s_{inter}$, $K^s_{inter}$, and $V^s_{inter}$, and processed as:
\begin{equation}
    \tilde{s}_{inter} = \text{Softmax}\!\left(Q^s_{inter}(K^s_{inter})^\top/{\sqrt{d_k}}\right) V^s_{inter}.
\end{equation}
The inter-window output $\tilde{s}_{inter}$ is reshaped back and fed into the next iteration. Repeating this alternating process for $N$ iterations progressively refines the representation, yielding the dual-scale enhanced representation $\tilde{s} \in \mathbb{R}^{m \times (n+1)C \times d}$. 
We then apply an inter-window aggregator over $\tilde{s}$ to obtain node features $u \in \mathbb{R}^{(n+1)C \times d}$ for window causal graph construction. Following the partition of $x_{lag}$ and $x_{t}$, we split $u$ into past nodes $u_{lag}$ and the current nodes $u_t$. 
Finally, we encode $u_{lag}$ and $u_t$ separately, and compute their inner product to estimate the link probability matrix $\tilde{G}$:
\begin{gather}
    u = Aggregator(\tilde{s}), \quad \tilde{G} =  \text{MLP}(u_{lag}) \text{MLP}(u_t)^T, \notag \\
    u \in \mathbb{R}^{(n+1)C \times d}, \quad u_{lag} \in  \mathbb{R}^{nC \times d}, \quad u_t \in  \mathbb{R}^{C \times d}.
\label{equ:uvg}
\end{gather}
Here, $\tilde{G}$ is a binary matrix indicating the presence or absence of causal links. The causal graph is trained to align the estimated link probability matrix $\tilde{G}$ with its ground-truth matrix $G$ as:
\begin{gather}
\label{eq:graph}
    \text{BCE}(G,\tilde{G}) = -G \cdot \log(\tilde{G}) - (1-G) \cdot \log(1-\tilde{G}).
\end{gather}

\textbf{Context-conditioned Exogenous Variable Estimation.}
Considering the diversity of time series and their underlying causal mechanisms, the distributions of exogenous variables can vary substantially across datasets. Therefore, we propose a context-conditioned approach that leverages the dual-scale enhanced representation $\tilde{s}$ as contextual input to estimate exogenous variable distributions. According to Equation (\ref{equ:def}), we define a data-generating distribution $p(D)$ that jointly encapsulates the underlying causal structure and the influence of exogenous factors:
\begin{equation}
\label{eq:generative_process}
x \sim p(D) = \int_{G} \int_{z} p(D \mid G, z) \cdot p(G) \cdot p(z) dzdG.
\end{equation}
Here, a sample $x$ is generated by first drawing from the prior distribution of exogenous variable $p(z)$ and the causal structure distribution $p(G)$ , and then iteratively generating data through the data-generating mechanism $p(D|G, z)$.

In time series, noise and external disturbances often exhibit diverse patterns, reflecting heterogeneous exogenous influences. Therefore, a single shared Gaussian prior is insufficient to model such variability.
To address this, we model the exogenous prior as a context-conditioned Gaussian mixture, where the mixing coefficients $\boldsymbol{\pi} \in \mathbb{R}^{K}$ are generated from the enhanced representation $\tilde{s}$ via a routing mechanism. The means $\mu$ and variances $\Sigma$ of the $K$ Gaussian components are learnable parameters:
\begin{equation}
    \boldsymbol{\pi} = \text{Router}(\tilde{s}), p(z) = \sum_{k=1}^K \pi_{k} \mathcal{N}(z \mid \mu_k, \Sigma_k), \sum_{k=1}^K \pi_{k} = 1,
\end{equation}
where $\text{Router}(\cdot)$ maps the context $\tilde{s}$ to mixing weights, and $\mathcal{N}$ denotes the Gaussian probability density function.

We employ a variational encoder to approximate the posterior distribution $q(z)$ of the exogenous variables conditioned on the temporal representation $\tilde{s}$. Using the reparameterization trick, we sample the exogenous variable $Z_t$ from $q(z)$ at each time step $t$. %
According to Eq.~\ref{equ:def}, the sampled variable $Z_t$ is then incorporated into the reconstruction of the current time series $x_t$, together with the predicted causal graph $\tilde{G}$, and the lagged endogenous variables $x_{lag}$. 
In this manner, both exogenous influences and the causal dependencies are explicitly modeled, ensuring that the learned representations are consistent with a structural causal model. The reconstruction of $x_t$ is formulated as:
\begin{equation}
\label{eq:recon}
    \tilde{x}_t = \tilde{g} \left ( \tilde{G}_p \cdot \tilde{f}(x_{lag}) \right ) + Z_t, 
\end{equation}
where $\tilde{g}(\cdot)$ and $\tilde{f}(\cdot)$ denote multi-layer perceptrons. We use the reconstruction error $\lVert x_t - \tilde{x}_t \rVert_2$ as the supervision signal for both pre-training and fine-tuning.

To regularize the latent space, we introduce a KL divergence term between the learned posterior distribution $q(z)$ and the assumed prior mixture Gaussian distribution $p(z)$:
\begin{equation}
     \mathcal{L}_{KL}(p(z) \parallel q(z)) = \int p(\mathbf{z}) \, 
\log \frac{p(\mathbf{z})}{q(\mathbf{z})} \, d\mathbf{z},\ 
\end{equation}

\subsection{Transferable Causal Learning with Causal Augmentation}

\textbf{Intervention Pretext Task.}
Time series data often exhibit autocorrelation and interdependencies, where past values influence the present, and multivariate series may involve cross-channel causality. Relying solely on observational data can confuse correlation and causality, making it difficult to distinguish between causal graphs within the same Markov equivalence class~\citep{environments,DBLP:conf/uai/interventions1}. 
Interventions on variables, by actively altering the underlying data-generating mechanisms, provide additional information that enhances causal identifiability.

We propose an intervention pretext task that uses interventions to facilitate causal structure learning. Specifically, we randomly select a time interval $[t_1, t_2]$ from the time series $x$ and replace the observation at $t_1$ with a randomly sampled value. Based on the underlying causal dynamics, we then iteratively propagate this intervention to compute the intervened segment $(x^{\prime}_{t_1}, \cdots, x^{\prime}_{t_2})$, replacing the original segment $(x_{t_1}, \cdots, x_{t_2})$ to obtain a modified series $x^{do}$. By examining segmented windows of $x^{do}$, we identify those inconsistent with the causal structure to derive binary ground-truth intervention labels.
To enhance generalization and encourage the model to capture stable causal dependencies, we train it to predict the window in which an intervention occurs.
We formulate intervention detection as a classification task for intervention detection:
\begin{equation}
\mathcal{L}_{do\_window} = \sum_{i=1}^m\left(-\, h_i \log(\hat{h}_i) - (1-h_i)\log(1-\hat{h}_i)\right),
\end{equation}
where $h_i$ and $\hat{h_i}$ denote the ground-truth and predicted intervention labels for window $i$, respectively.

Furthermore, Theorem~\ref{the:intervention} shows that optimizing the proposed intervention detection objective is equivalent to identifying the true causal mechanisms. The proof is provided in Appendix~\ref{thm:identifiability}.
\begin{theorem}
\label{the:intervention}
\textbf{(Causal Identifiability via Intervention Detection)} 
Let $P_{\text{obs}}(\mathbf{x})$ denote the observational distribution induced by the true SCM, and $P_{\text{int}}(\mathbf{x})$ denote the interventional distribution where a subset of variables $\mathbf{x}_{\mathcal{I}}$ (e.g., a time window) is replaced by samples from a noise or marginal proposal distribution $Q(\mathbf{x}_{\mathcal{I}})$. 
The optimal discriminator $D^*$ that minimizes the intervention classification loss is monotonically dependent on the true causal mechanism $P(\mathbf{x}_{\mathcal{I}} \mid Pa(\mathbf{x}_{\mathcal{I}}))$. 
Consequently, optimizing $\mathcal{L}_{do\_window}$ necessitates the model to learn the true underlying causal structure and the associated conditional dependencies.
\end{theorem}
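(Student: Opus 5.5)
The plan is the standard density-ratio argument for binary classifiers, specialised to the factorisation of the structural causal model in Equation~(\ref{equ:def}). I would first fix the population form of $\mathcal{L}_{do\_window}$. Each window is labelled $h=1$ with prior probability $\rho$ and then drawn from $P_{\text{int}}$, or labelled $h=0$ and drawn from $P_{\text{obs}}$. The expected binary cross-entropy of a discriminator $D(\mathbf{x})=\hat h$ is then
\[
\mathcal{L}(D)=-\int \Big[\rho\, P_{\text{int}}(\mathbf{x})\log D(\mathbf{x})+(1-\rho)\,P_{\text{obs}}(\mathbf{x})\log\big(1-D(\mathbf{x})\big)\Big]\,d\mathbf{x}.
\]
I would minimise this pointwise, as in the GAN optimal-discriminator lemma. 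For fixed $\mathbf{x}$ the map $a\log D+b\log(1-D)$ is strictly concave on $(0,1)$ and is maximised at $D=a/(a+b)$. This gives
\[
D^*(\mathbf{x})=\frac{\rho P_{\text{int}}(\mathbf{x})}{\rho P_{\text{int}}(\mathbf{x})+(1-\rho)P_{\text{obs}}(\mathbf{x})}=\sigma\!\Big(\log\tfrac{\rho}{1-\rho}+\log\tfrac{P_{\text{int}}(\mathbf{x})}{P_{\text{obs}}(\mathbf{x})}\Big),
\]
so $D^*$ is a strictly increasing function of the likelihood ratio.

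The second step uses the causal Markov factorisation implied by Equation~(\ref{equ:def}). This holds under the assumptions of no instantaneous effects and causal sufficiency. I would write $P_{\text{obs}}(\mathbf{x})=P(\mathbf{x}_{\mathcal{I}}\mid Pa(\mathbf{x}_{\mathcal{I}}))\prod_{v\notin\mathcal{I}}P(x_v\mid Pa(x_v))$. For the interventional distribution, the intervened block is replaced by $Q(\mathbf{x}_{\mathcal{I}})$, and the remaining mechanisms are kept unchanged (modularity, i.e.\ the truncated factorisation). The factors outside $\mathcal{I}$ then cancel in the ratio, leaving
\[
\frac{P_{\text{int}}(\mathbf{x})}{P_{\text{obs}}(\mathbf{x})}=\frac{Q(\mathbf{x}_{\mathcal{I}})}{P(\mathbf{x}_{\mathcal{I}}\mid Pa(\mathbf{x}_{\mathcal{I}}))}.
\]
Hence $D^*$ is strictly decreasing in the true mechanism $P(\mathbf{x}_{\mathcal{I}}\mid Pa(\mathbf{x}_{\mathcal{I}}))$ for fixed $Q$, which is the claimed monotone dependence. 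Under the additive noise model this mechanism equals $p_Z\big(\mathbf{x}_{\mathcal{I}}-g(\sum G\cdot f(\cdot))\big)$, so it depends explicitly on $G$, $f$ and $g$.

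The third step is the ``consequently'' clause, which I expect to be the main obstacle. A proper proof of it needs an identifiability argument. Suppose a model attains the optimal loss. Then it recovers $D^*$ almost everywhere, hence the ratio $Q/P(\mathbf{x}_{\mathcal{I}}\mid Pa)$, and since $Q$ is known, it recovers the conditional $P(\mathbf{x}_{\mathcal{I}}\mid Pa(\mathbf{x}_{\mathcal{I}}))$ itself. I would then argue the following. Because the mechanism is an additive noise model with nonlinear $g_i,f_{k,j,i}$, and causal minimality holds, two lagged graphs inducing the same conditional must coincide. The reason is that under minimality every edge $j\to i$ at lag $k$ changes the conditional mean as a function of $x_{t-n+k,j}$, and no instantaneous effects fixes the orientation by time. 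This step requires that $Q$ has full support, so the ratio is defined and informative everywhere. It also requires that interventions are placed at varied windows, so that every target $i$ is eventually covered.

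A second subtlety is that the actual labelling propagates the intervention forward through the segment $[t_1,t_2]$, so not just one block changes. I would handle this by applying the truncated factorisation to the first intervened time step $t_1$ only. The later steps are generated by the unchanged mechanisms, so their factors still cancel in the ratio. I would state these conditions explicitly as hypotheses rather than try to avoid them.
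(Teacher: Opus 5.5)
Your proposal is correct and follows the paper's own route: the GAN optimal-discriminator lemma, then the Markov/truncated factorisation, cancellation of the non-intervened factors, and monotonicity of the sigmoid. The differences are that you use the opposite label convention (so your $D^*$ is decreasing rather than increasing in $P(\mathbf{x}_{\mathcal{I}}\mid Pa(\mathbf{x}_{\mathcal{I}}))$) and an explicit class prior $\rho$. Your third step, which states the hypotheses ($Q$ with full support, interventions covering every target, causal minimality, no instantaneous effects) under which the recovered conditional pins down the lagged graph, is a more careful version of the paper's informal closing claim that only the true parent set yields the optimal conditional density.
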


To preserve semantic consistency under interventions, we apply contrastive learning to align the representation $u^{do}$ of the intervened series $x^{do}$ with the representation $u$ of the original $x$ using Eq.~\ref{equ:uvg}:

\begin{equation}
\begin{aligned}
\mathcal{L}_{do\_contrast} &= - \sum_{i=1}^{(n+1)C} \log\frac{\exp\!\left(\mathrm{sim}(u^{do,+},u_{i})/\tau\right)}
{\displaystyle\sum_{j=1}^{(n+1)C}\exp\!\left(\mathrm{sim}(u^{do}_{j},u_{i})/\tau\right)}, 
\end{aligned}
\end{equation}
where $\tau$ is a temperature parameter, $sim(\cdot,\cdot)$ is the similarity function, and $u^{do,+}$ denotes the positive pair of $u_i$ from the same node in the window causal graph. 
We jointly optimize the intervention detection objective and contrastive objective, \ie $\mathcal{L}_{do} = \mathcal{L}_{do\_window} + \mathcal{L}_{do\_contrast}$ to enhance causal discovery capability. The reconstruction loss on $x^{do}$ is computed following Eq.~\ref{eq:recon}.

\textbf{Time Series Causal Mixup.} 
To improve the generalization of the causal discovery framework, we propose a time series causal mixup strategy for temporal causal discovery. This operation smooths the solution space, yielding more stable and continuous estimates of edge existence. 
Specifically, we randomly sample two time series from the training dataset, each with latent embedding $\tilde{s}^i$ and its corresponding causal relation $G^i$. We then construct mixed samples as:
\begin{equation}
    \tilde{s}^m = \lambda_1\tilde{s}^1 + \lambda_2\tilde{s}^2, 
    \quad G^m = \lambda_1G^1 + \lambda_2G^2,
\end{equation}
where the mixing coefficient $\lambda_i$ is drawn from a Beta distribution.
By blending samples from diverse time series and varying causal graphs, time series causal mixup achieves a smoother optimization landscape during training, enhancing the transferability of causal dependency modeling.

We further present Theorem~\ref{thm:consistency}, which suggests that causal mixup enforces the learned causal probabilities to be transferable and physically consistent, avoiding the artifacts introduced by input-space interpolation. The proof is provided in Appendix~\ref{app:proofs_mixup}. Since function smoothness is closely related to generalization~\citep{DBLP:journals/tsp/smooth1,DBLP:conf/icml/smooth2}, PTCD achieves transferable causal discovery capability.

\begin{theorem}
\label{thm:consistency}
\textbf{(Geodesic Consistency of Causal Probability)} Under Assumption \ref{ass:linearity}, linear interpolation in the latent space $\mathcal{S}$ approximates a geodesic interpolation on the data manifold $\mathcal{M}$. 
Minimizing the Mixup empirical risk objective, which penalizes the discrepancy between the network prediction $\hat{G}(s^{mix})$ and the interpolation target $\lambda G(x_1) + (1 - \lambda)G(x_2)$ under a bounded loss $\epsilon$, inherently imposes an implicit gradient penalty proportional to the Frobenius norm of the Jacobian, $\mathcal{O}(\|\nabla_s \hat{G}(s)\|_F^2)$. This bounds the local Lipschitz constant of the causal mapping, regularizing the Jacobian field to be smooth along the transport path and ensuring stable causal graph discovery.
\end{theorem}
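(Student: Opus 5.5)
The statement combines two claims, and I would prove them separately. The first is that latent linear interpolation approximates a geodesic. The second is that the mixup risk carries an implicit Jacobian penalty.

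For the first claim I would use Assumption~\ref{ass:linearity}. I read it as saying that the encoder $E:\mathcal{M}\to\mathcal{S}$ is a local isometry onto a neighbourhood. Equivalently, the decoder $E^{-1}$ has a pullback metric $J^\top J$ that is approximately constant along the segment, up to $\mathcal{O}(\|s_1-s_2\|)$. Under this reading the geodesic equation $\ddot{\gamma}^k+\Gamma^k_{ij}\dot\gamma^i\dot\gamma^j=0$ has Christoffel symbols of order the metric variation. The straight line $s^{mix}(\lambda)=\lambda s_1+(1-\lambda)s_2$ therefore solves it up to a second-order term. A Taylor expansion of $E^{-1}(s^{mix})$ about the geodesic midpoint then gives a deviation of $\mathcal{O}(\|s_1-s_2\|^2)$. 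This deviation is what lets the target $\lambda G(x_1)+(1-\lambda)G(x_2)$ be treated as a label attached to a point on $\mathcal{M}$, rather than to an off-manifold input-space blend.

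For the second claim I would follow the standard mixup-regularization analysis in the style of Zhang et al. Expand $\hat G(s^{mix})$ to second order around $\bar s=\mathbb{E}[s^{mix}]$ (or around each endpoint). Write $\delta=s^{mix}-\bar s$, so that $\hat G(s^{mix})=\hat G(\bar s)+\nabla_s\hat G(\bar s)\delta+\tfrac12\delta^\top\nabla^2\hat G\,\delta+o(\|\delta\|^2)$. Substitute this into the mixup loss $\ell(\hat G(s^{mix}),\lambda G_1+(1-\lambda)G_2)$, using BCE, which is bounded by $\epsilon$ on the relevant set. Then take the expectation over $\lambda\sim\mathrm{Beta}$ and over the sample pairs. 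The zeroth-order term gives the standard empirical risk. The first-order term vanishes in expectation by the choice of centring. The second-order term gives $\mathrm{Var}(\lambda)\,\mathbb{E}[\,(s_1-s_2)^\top \nabla\hat G^\top H_\ell \nabla\hat G\,(s_1-s_2)]$, where $H_\ell$ is the loss Hessian.

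I would bound $H_\ell$ below by a positive constant $c_\epsilon$ using the boundedness of the loss. For BCE, $\tilde G$ bounded away from $0$ and $1$ gives $H_\ell\succeq c_\epsilon I$. Combined with an isotropy or second-moment assumption on $s_1-s_2$, namely $\mathbb{E}[(s_1-s_2)(s_1-s_2)^\top]\succeq \sigma^2 I$, this term is at least $c_\epsilon\sigma^2\mathrm{Var}(\lambda)\|\nabla_s\hat G\|_F^2$. This gives the claimed $\mathcal{O}(\|\nabla_s\hat G\|_F^2)$ penalty. Finally, a bounded Frobenius norm of the Jacobian along the segment bounds the operator norm, and hence the local Lipschitz constant, by the mean value theorem: $\|\hat G(s)-\hat G(s')\|\le \sup_{\text{seg}}\|\nabla\hat G\|_F\,\|s-s'\|$. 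This yields stability of the predicted causal graph along the transport path.

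I expect the main obstacle to be the second-order step. The Taylor remainder and the cross term between the label interpolation and the prediction curvature have to be controlled. The label term $\lambda G_1+(1-\lambda)G_2$ is itself linear in $\lambda$, so the residual $\hat G(s^{mix})-\text{target}$ picks up a curvature term of order $\lambda(1-\lambda)\,\delta^\top\nabla^2\hat G\,\delta$. I would first absorb this using a bounded-Hessian assumption, which is locally implied by smooth MLP heads. If that turns out to be too crude, I would state the result only as an approximate regularizer that holds up to $o(\mathrm{Var}(\lambda))$. This matches the "approximates" and "implicit" wording of Theorem~\ref{thm:consistency}.
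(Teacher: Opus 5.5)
The step that fails is your claim that the first-order term vanishes in expectation by the choice of centring. Centring at $\bar s=\mathbb{E}[s^{mix}]$ makes $\delta=(\lambda-\bar\lambda)\Delta s$ mean-zero. But the target $y(\lambda)=\bar y+(\lambda-\bar\lambda)\Delta G$ moves with $\lambda$ in lockstep (here $\Delta s=s_1-s_2$, $\Delta G=G_1-G_2$, $\bar\lambda=\mathbb{E}\lambda$). So the loss derivative multiplying $\nabla_s\hat G(\bar s)\,\delta$ is correlated with $\delta$, and the product does not average out. For a fixed pair and the squared Frobenius loss the paper uses, one gets, up to curvature terms,
\[
\mathbb{E}_\lambda\bigl\|\hat G(s^{mix})-y(\lambda)\bigr\|_F^2\approx\bigl\|\hat G(\bar s)-\bar y\bigr\|_F^2+\mathrm{Var}(\lambda)\,\bigl\|\nabla_s\hat G(\bar s)\,\Delta s-\Delta G\bigr\|_F^2 .
\]
For BCE the same structure appears through the mixed derivative $\partial_p\partial_y\ell=-1/(p(1-p))\neq 0$, with elementwise weights $1/(\bar y(1-\bar y))$ at $\hat G(\bar s)=\bar y$. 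Hence the $\mathrm{Var}(\lambda)$-order term is minimized at $\nabla_s\hat G\,\Delta s=\Delta G$, which is nonzero whenever the paired graphs differ, and that is the generic case in mixup.

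Your quadratic form $\mathrm{Var}(\lambda)\,\Delta s^\top\nabla\hat G^\top H_\ell\nabla\hat G\,\Delta s$ is therefore offset by a cross term of the same order. The $\mathrm{Var}(\lambda)$-order part of the risk is not bounded below by $c_\epsilon\sigma^2\mathrm{Var}(\lambda)\|\nabla_s\hat G\|_F^2$; for BCE it is even negative at its minimizer. Minimizing the risk does not push the Jacobian toward zero. The Lipschitz step then rests only on a slope bound $\|\nabla_s\hat G\,\Delta s\|\lesssim\|\Delta G\|$ set by the labels. Relatedly, the zeroth-order term at $\bar s$ is the loss at the mixed point with the mixed label, not the standard empirical risk. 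You get the empirical risk only by expanding about the endpoints, and then the first-order term does not vanish by centring either.

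There are two ways to repair this. The first recovers the Zhang et al.\ penalty honestly. Use the fact that BCE is affine in its target to rewrite the mixup risk as a perturbed-input risk with \emph{unmixed} labels and a reweighted mixing variable, then expand about each endpoint. The zeroth-order term is then the empirical risk, and the quadratic term is the positive semidefinite Jacobian penalty you want. The first-order and curvature terms carry a factor of the residual at the data points. Controlling that residual is what the bounded-loss hypothesis $\epsilon$ should be used for, rather than lower-bounding $H_\ell$.

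The second is the paper's route. It expands $\hat G$ itself to first order about $s_2$ with a Lagrange remainder. The linear part absorbs the label interpolation ($\hat G(s_2)\approx G_2$, $\nabla_s\hat G(s_2)\,\Delta s\approx\Delta G$). A small mixup loss is then read as control of the remainder, i.e.\ of the curvature of $\hat G$ along the segment. From this the paper concludes that the Jacobian is nearly constant along the path and that the local Lipschitz constant is bounded. So the curvature term of order $\lambda(1-\lambda)\,\Delta s^\top\nabla_s^2\hat G\,\Delta s$, which your last paragraph treats as an obstacle to assume away, is exactly the quantity the paper's argument is built on.

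Your geodesic step is at the same heuristic level as the paper's, with two caveats. First, Assumption~\ref{ass:linearity} concerns local linearity of the latent dynamics $g$, not isometry of the encoder, so the near-isometry you invoke is an added hypothesis. The paper leans instead on Definition~\ref{def:embedding} and its informal ``unfolding'' claim. Second, an $\mathcal{O}(\|s_1-s_2\|^2)$ chord--geodesic deviation holds in any smooth chart, so it only has content for nearby pairs.
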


Following Eq.~\ref{eq:graph}, we compute the binary cross-entropy loss $\text{BCE}(G^m,\tilde{G}^{m})$ between the estimated link probability matrix $\tilde{G}$ and the mixup link probability matrix $G^m$.

\subsection{Causal Learning and Root Cause Identification}
\textbf{Pretraining Phase.} During pretraining, we generate synthetic datasets by iteratively simulating time series from causal graphs (see Appendix~\ref{apx:syn}) to train the model. 
Given a time series $x$, the objective function consists of several components: the reconstruction loss $\mathcal{L}_{recon}$, the causal graph loss $\mathcal{L}_{G}$, the KL divergence loss $L_{KL}$, the intervention loss $\mathcal{L}_{do}$. The overall pre-training loss is defined as:
\begin{equation}
    \mathcal{L}_{pre} = \mathcal{L}_{recon} + \lambda_G\mathcal{L}_G + \lambda_{KL}\mathcal{L}_{KL} + \lambda_{do}\mathcal{L}_{do},
\end{equation}
where $\lambda_G$, $\lambda_{KL}$, $\lambda_{do}$ are the corresponding weighting coefficients. The reconstruction loss $\mathcal{L}_{\mathrm{recon}}$ is computed over two types of input, the original series, the intervened series, and is defined as $\mathcal{L}_{recon} = \left\lVert x_{t} - \tilde{x}_{t} \right\rVert_{2} + \left\lVert x^{do}_{t} - \tilde{x}^{do}_{t} \right\rVert_{2}$. The causal graph loss $\mathcal{L}_{G}$ is computed over two types of input, the original series, the mixup series, and is defined as $\mathcal{L}_{G} = \text{BCE}(G,\tilde{G}) + \text{BCE}(G^m,\tilde{G}^m)$.

\textbf{Fine-tuning Phase.} During fine-tuning, we address two downstream tasks: \textit{Causal Discovery} and \textit{Root Cause Identification}. In real-world scenarios, the true data generation process is typically unavailable. Hence, we focus solely on the reconstruction loss with $s_t$,  
and the KL divergence loss $L_{KL}$. The overall fine-tuning loss is as follows:
\begin{equation}
\begin{aligned}
\mathcal{L}_{ft}
&= \lVert \tilde{s}_t - s_t \rVert_2
 + \lambda_{KL}\mathcal{L}_{KL}. 
\end{aligned}
\end{equation}

For \textit{Root Cause Identification}, following AERCA~\citep{DBLP:conf/iclr/aerca}, we score exogenous variable abnormality to pinpoint root causes.  
We compute the mean ($\mu$) and standard deviation ($\sigma$) of the exogenous distribution from normal data, and compute the root cause score as the z-score, $\mathrm{score}_t=\min_{k=1}^K\left(\frac{Z_t-\mu_k}{\sigma_k}\right)$. We then adopt Streaming Peaks-Over-Threshold~\citep{DBLP:conf/kdd/anomalyth} for threshold selection.

%% file: src/experiments.tex
\section{Experiments}

\subsection{Experimental Design}
\label{exp_design}
\textbf{Datasets.}
We construct a synthetic time series causal dataset (see Appendix~\ref{apx:syn}) for pretraining and fine-tune the model on two real-world tasks. 
\textit{Causal Discovery.} We evaluate on the German river benchmark~\citep{DBLP:conf/iclr/causalriver}, which includes causal graphs from Eastern Germany (666 stations) and Bavaria (494 stations), covering two variable types and five causal relation types: Close, Root Cause, Random+1, Confounder, and Random. The model is fine-tuned on Bavaria and evaluated on Eastern Germany. 
\textit{Root Cause Identification.} We fine-tune and evaluate on two benchmarks. The SWaT dataset~\citep{DBLP:conf/cpsweek/swat}, collected from a scaled-down water treatment testbed under both normal and cyber-attack conditions, is widely used for intrusion detection in industrial control systems. 
The MSDS dataset~\citep{DBLP:conf/esocc/msds}, generated on an OpenStack-based distributed infrastructure, injects controlled faults to emulate anomalies in multi-source cloud environments. Dataset details are provided in Appendix~\ref{apx:dataset}.

\textbf{Evaluation Metrics.}
\textit{Causal Discovery.} To avoid threshold tuning, we report the AUROC score~\citep{DBLP:journals/kais/metric1,DBLP:journals/tmlr/metric2,DBLP:conf/aistats/metric3,DBLP:journals/make/tcdf}.
\textit{Root Cause Identification.} Following prior work~\citep{DBLP:conf/nips/RCD,DBLP:conf/kdd/CIRCA,DBLP:conf/www/ack3,DBLP:conf/www/ack4,DBLP:conf/iclr/aerca}, we evaluate using recall at top-$k$, denoted as $Recall@K$. 
This metric measures the probability of correctly identifying root causes within the top-$k$ highest root cause scores. Further details are in Appendix~\ref{apx:details_metrics}.

\textbf{Baselines.}
\textit{Causal Discovery.}
We evaluate representative methods spanning classical and modern paradigms. Classical approaches include PCMCI~\citep{PCMCI}, Varlingam~\citep{DBLP:journals/jmlr/Varlingam}, Dynotears~\citep{DBLP:conf/aistats/Dynotears}, VAR~\citep{DBLP:conf/ijcai/VAR}, CDMI~\citep{CDMI}, TCDF~\citep{DBLP:journals/make/tcdf}, CUTS+~\citep{cuts+}. Among recent pretraining methods, we compare with Causal Pretraining (CP)~\citep{DBLP:journals/corr/cp}, which is implemented through both Transformer-based and GRU-based architectures. 
\textit{Root Cause Identification.} 
We compare PTCD with four baselines: 1) 
$\varepsilon$-Diagnosis~\citep{DBLP:conf/www/-Diagnosis}, 2) RCD~\citep{DBLP:conf/nips/RCD}, 3) CIRCA~\citep{DBLP:conf/kdd/CIRCA}, and 4) AERCA~\citep{DBLP:conf/iclr/aerca}.

\subsection{Experimental Results}

\begin{table*}[t]
\centering
\caption{Causal discovery results of AUROC on Eastern Germany river datasets, best in bold and second-best underlined.} 
\resizebox{0.98\textwidth}{!}{
\begin{tabular}{c|ccccccc|ccc}
\toprule
\diagbox{Dataset}{Model}   &  TCDF &  CUTS+    & VAR           & Varlingam     & Dynotears & PCMCI         & CDMI           & CP(Gru) & CP(Transformer)  & PTCD      \\
\midrule
Close (3)   & 0.57  &  0.56  & \underline{0.81} & 0.79 & 0.50       & 0.64          & \underline{0.81} & 0.79 & 0.75 & \textbf{0.84} \\
Close (5)   &  0.51   &  0.54  & 0.81 & 0.77          & 0.50       & 0.62 & 0.81          & 0.81 &  \underline{0.83} &   \textbf{0.85}       \\
Root cause (3)  &   0.55  &  0.56  & 0.79          & 0.77          & 0.56      & 0.70           & 0.75                   & 0.78  &  \underline{0.84}     &      \textbf{0.88}     \\
Root cause (5)   &  0.54   &  0.59  & 0.75          & 0.77 & 0.56      & 0.74          & 0.65           & 0.81      & \underline{0.84} &     \textbf{0.86}      \\
Random+1 (3)   &  0.72   &   0.56    & 0.80  & \underline{0.84} & 0.52      & 0.83          & 0.82            & 0.82       & 0.82          &     \textbf{0.87}      \\
Random+1 (5)   &  0.56  &    0.57   & 0.79     & 0.79          & 0.61      & 0.74          & 0.80      & \underline{0.84}     & \textbf{0.85} &    \textbf{0.85}       \\
Confounder (3) & 0.62 &  0.55  &  \underline{0.71} & 0.68          & 0.53      & 0.66          & 0.63                   & 0.64     & 0.65     &       \textbf{0.81}    \\
Confounder (5)  &  0.56 &  0.51  & \underline{0.72} & 0.70  & 0.53      & 0.64 & 0.71             & 0.71    & 0.71 &       \textbf{0.79}    \\
Random (3)   &   0.55  &  0.57  & \underline{0.82} & 0.79          & 0.50       & 0.65          & 0.80           &  0.77      & 0.81        &     \textbf{0.85}      \\
Random (5)   &  0.52  &  0.54  & 0.80  & 0.75          & 0.51      & 0.65          & 0.78      &   \underline{0.83}      & \textbf{0.86} &     \textbf{0.86}      \\
\midrule
\textbf{avg}   &  0.57  &  0.56  & 0.78          & 0.77          & 0.53      & 0.69          & 0.76      & 0.78          & 0.80          &    \textbf{0.85}     \\
\bottomrule
\end{tabular}
}
\label{tab:cd_r}
\vspace{-1em}
\end{table*}

\begin{table}[t]
\centering
\caption{Results of root cause identification, best in bold.}
\resizebox{0.91\columnwidth}{!}{
\begin{tabular}{c|c|ccccc}
\toprule
Dataset               & Model                & Recall@1                 & Recall@3        & Recall@5                 & Recall@10                & Avg@10               \\
\midrule
\multirow{5}{*}{MSDS} & $\epsilon$-Diagnosis & 0.004$\pm$0.004 & 0.266$\pm$0.002 & 0.452$\pm$0.009          & \textbf{1.000}$\pm$\textbf{0.000} & 0.492$\pm$0.001          \\
                      & RCD         & 0.412$\pm$0.048          & 0.573$\pm$0.010 & 0.984$\pm$0.001 & \textbf{1.000}$\pm$\textbf{0.000} & 0.821$\pm$0.012          \\
                      & CIRCA                & 0.454$\pm$0.238          & 0.860$\pm$0.140 & 0.917$\pm$0.084          & \textbf{1.000}$\pm$\textbf{0.000}          & 0.809$\pm$0.035          \\
                      & AERCA                & 0.381$\pm$0.408 & 0.908$\pm$0.062 & 0.974$\pm$0.027          & \textbf{1.000}$\pm$\textbf{0.000}          & 0.896$\pm$0.037          \\
                      & PTCD        &       \textbf{0.515}$\pm$\textbf{0.252}               &      \textbf{0.993}$\pm$\textbf{0.004}       &    \textbf{0.996}$\pm$\textbf{0.003}                 &             \textbf{1.000}$\pm$\textbf{0.000}         &     \textbf{0.929}$\pm$\textbf{0.013}      \\
\midrule
\multirow{5}{*}{SWaT} & $\epsilon$-Diagnosis          & 0.075$\pm$0.179          & 0.125$\pm$0.217 & 0.125$\pm$0.217          & 0.375$\pm$0.383 & 0.180$\pm$0.194          \\
                      & RCD         & 0.000$\pm$0.000          & 0.000$\pm$0.000 & 0.000$\pm$0.000          & 0.300$\pm$0.458          & 0.100$\pm$0.161          \\
                      & CIRCA       & 0.000$\pm$0.000 & 0.000$\pm$0.000 & 0.000$\pm$0.000 & 0.300$\pm$0.458          & 0.100$\pm$0.161 \\   
                      & AERCA       & 0.220$\pm$0.111          & 0.290$\pm$0.088 & 0.330$\pm$0.048          & 0.455$\pm$0.044          & 0.342$\pm$0.052          \\
                      & PTCD        &       \textbf{0.300}$\pm$\textbf{0.132}               &      \textbf{0.450}$\pm$\textbf{0.107}       &    \textbf{0.450}$\pm$\textbf{0.137}                  &             \textbf{0.475}$\pm$\textbf{0.141}         &   \textbf{0.440}$\pm$\textbf{0.105}  \\
\bottomrule
\end{tabular}
}
\label{tab:rc_r}
\end{table}

\begin{figure*}[t]
  \centering
  
  \begin{subfigure}[b]{0.19\textwidth}
    \centering
    \includegraphics[width=\linewidth]{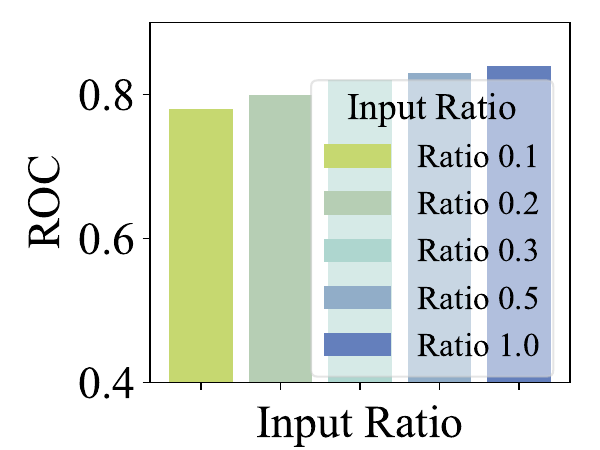}
    \caption{Close(3)}
  \end{subfigure}\hfill
  \begin{subfigure}[b]{0.19\textwidth}
    \centering
    \includegraphics[width=\linewidth]{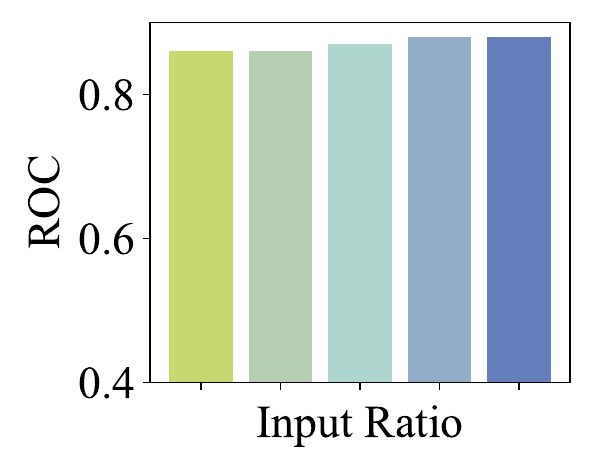}
    \caption{Root cause(3)}
  \end{subfigure}\hfill
  \begin{subfigure}[b]{0.19\textwidth}
    \centering
    \includegraphics[width=\linewidth]{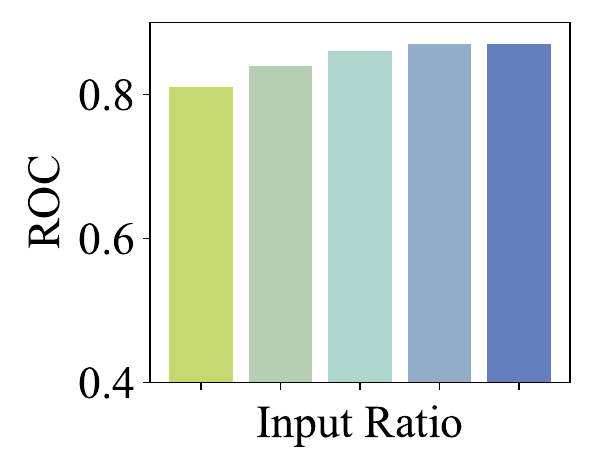}
    \caption{Random+1(3)}
  \end{subfigure}
  \begin{subfigure}[b]{0.19\textwidth}
    \centering
    \includegraphics[width=\linewidth]{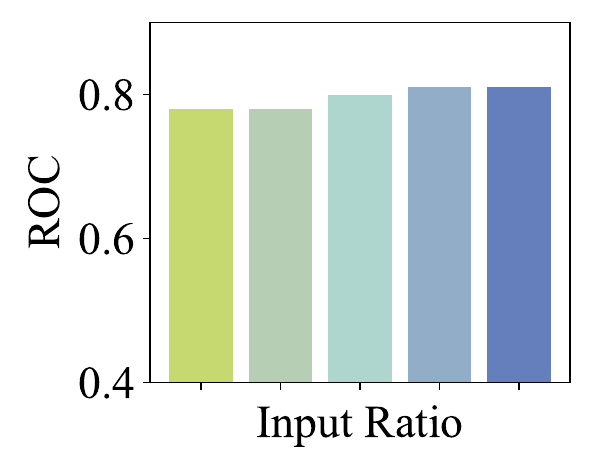}
    \caption{Confounder(3)}
  \end{subfigure}\hfill
  \begin{subfigure}[b]{0.19\textwidth}
    \centering
    \includegraphics[width=\linewidth]{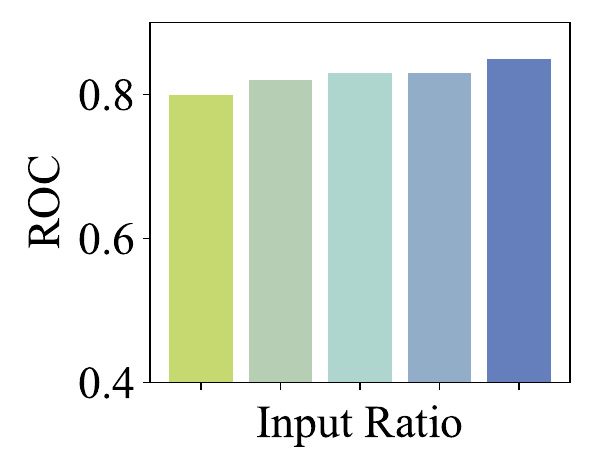}
    \caption{Random(3)}
  \end{subfigure}\hfill

  \vspace{0.6em} 

  \begin{subfigure}[b]{0.19\textwidth}
    \centering
    \includegraphics[width=\linewidth]{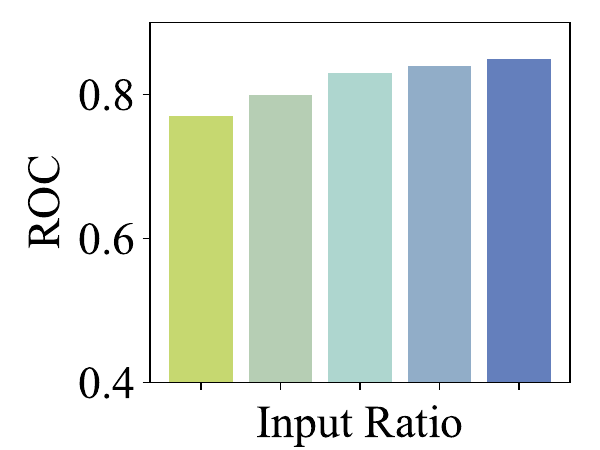}
    \caption{Close(5)}
  \end{subfigure}\hfill
  \begin{subfigure}[b]{0.19\textwidth}
    \centering
    \includegraphics[width=\linewidth]{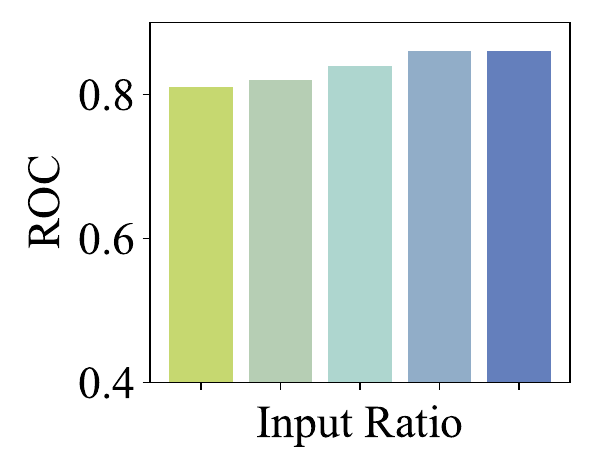}
    \caption{Root cause(5)}
  \end{subfigure}\hfill
  \begin{subfigure}[b]{0.19\textwidth}
    \centering
    \includegraphics[width=\linewidth]{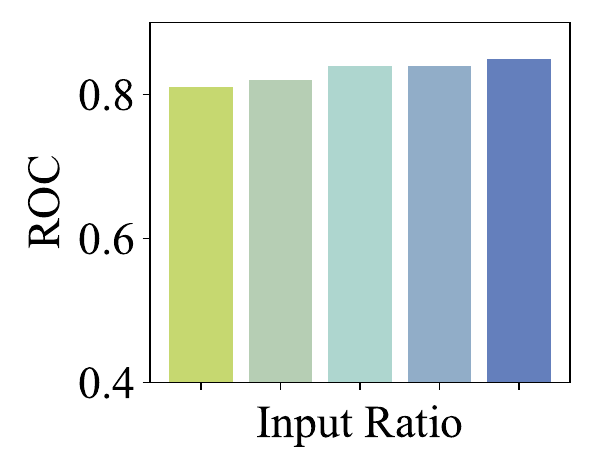}
    \caption{Random+1(5)}
  \end{subfigure}\hfill
  \begin{subfigure}[b]{0.19\textwidth}
    \centering
    \includegraphics[width=\linewidth]{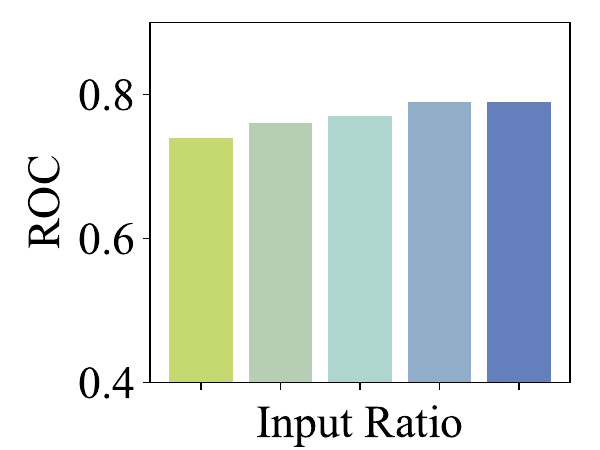}
    \caption{Confounder(5)}
  \end{subfigure}\hfill
  \begin{subfigure}[b]{0.19\textwidth}
    \centering
    \includegraphics[width=\linewidth]{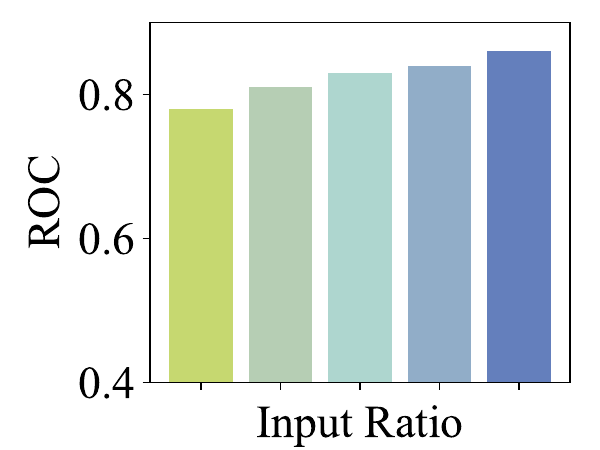}
    \caption{Random(5)}
  \end{subfigure}
  \caption{Results on the Eastern Germany river dataset (avg. length 3,143) under test input ratios of 10\%, 20\%, 30\%, 50\%, and 100\%.}
  \label{fig:input_radio}
\end{figure*}

\textit{Causal Discovery.} Table~\ref{tab:cd_r} presents the results for time series causal graph discovery, demonstrating that PTCD consistently outperforms all baselines across various types of causal relationships. 
This highlights the effectiveness of its dual-scale iterative representation enhancement and context-conditioned exogenous variables estimation in capturing complex causal dependencies. 
This approach enables the model to accurately identify true causal dependencies among time series while filtering out spurious correlations caused by confounders or environmental biases. For example, we observe that PTCD achieves strong causal discovery performance on the Eastern Germany river dataset, despite not being directly trained on it. Notably, compared to CP, our method demonstrates a significant improvement, with an average gain of 5\%. These results confirm that PTCD acquires a generalizable mechanism for causal discovery through intervention-based modeling and causal mixup, ensuring robustness even under real-world distribution shifts. 

\textit{Root Cause Identification.} Table~\ref{tab:rc_r} shows that PTCD achieves the best performance on real-world datasets in the root cause diagnosis task. 
Unlike AERCA, which primarily focuses on intra-window modeling, PTCD leverages a dual-scale iterative representation enhancement to capture both coarse-grained and fine-grained temporal patterns. This approach enhances the model’s ability to detect subtle variations over time, leading to superior performance in identifying the root cause of anomalies. Furthermore, the context-conditioned exogenous variable estimation boosts PTCD to adapt to the complex and often noisy conditions found in real-world data. 
According to the AC@1 metric, PTCD excels in accurately identifying the time series with the highest root cause score, demonstrating that incorporating intervention-based pre-training enables more stable detection of anomalous root causes even in the presence of dynamic shifts and external disturbances.

\textbf{Ablation Studies.}
\begin{table}[t]
\centering
\caption{Ablation study of causal discovery using the Eastern Germany river dataset. w/o inter, w/o intra, w/o exogenous, w/o causal mixup, w/o intervention represent removing the inter-window attention, intra-window attention, exogenous, causal mixup, and intervention, respectively.}

\resizebox{0.98\textwidth}{!}{
\begin{tabular}{c|cccccc}
\toprule

\diagbox{Dataset}{Model}       & w/o inter           & w/o intra     & w/o exogenous & w/o causal mixup         & w/o intervention          & PTCD  \\
\midrule
Close (3)      & 0.83 & 0.82 & 0.80       & 0.83          & 0.80 &   \textbf{0.84} \\
Close (5)      & 0.84 & 0.82          & 0.81       & 0.83 & 0.81 &    \textbf{0.85}       \\
Root cause (3) & 0.86          & 0.84          & 0.82      & 0.86           & 0.83                 &      \textbf{0.88}     \\
Root cause (5) & 0.85          & 0.84 & 0.82      & 0.85          & 0.83          &     \textbf{0.86}      \\
Random+1 (3)   & 0.85  & 0.84 & 0.83      & 0.86          & 0.84                 &     \textbf{0.87}      \\
Random+1 (5)   & 0.83          & 0.83          & 0.82      & 0.83          & 0.81  &    \textbf{0.85}       \\
Confounder (3) & 0.79 & 0.79          & 0.78      & 0.80          & 0.75          &       \textbf{0.81}    \\
Confounder (5) & 0.78 & 0.76  & 0.75      & 0.77 & 0.74        &       \textbf{0.79}    \\
Random (3)     & 0.84 & 0.84          & 0.81       & 0.83          & 0.82                  &     \textbf{0.85}      \\
Random (5)     & 0.85  & 0.84          & 0.83      & 0.84          & 0.83          &     \textbf{0.86}      \\
\midrule
\textbf{avg}  & 0.83          & 0.82          & 0.81      & 0.83          & 0.81                &    \textbf{0.85}     \\
\bottomrule
\end{tabular}
}
\label{tab:ablation_cd}
\end{table}
To assess the impact of different components in PTCD, we conduct an ablation study on inter-window attention, intra-window attention, mixture of Gaussians exogenous valuables, causal mixup, and intervention. As shown in Table~\ref{tab:ablation_cd}, each module contributes to the overall performance. Notably, removing exogenous variable estimation significantly reduces performance, highlighting the importance of learning context-conditioned distributions.
Furthermore, the inclusion of intervention yields notable performance improvements, particularly in datasets where the root cause is confounded (Confounder (3) and Confounder (5)). This demonstrates that interventions help the model distinguish between spurious correlations and true causal relationships, enhancing the accuracy of causal graph prediction. Further ablation studies are in Appendix~\ref{apx:ab_rc}.

\textbf{Different Time Series Ratio.} Figure~\ref{fig:input_radio} shows the impact of varying lengths of the input time series on performance. We find that using only 30\% of the full sequence length yields results comparable to the complete input, while even 10\% still provides reasonable results across all datasets. This shows that PTCD remains effective under limited resources and efficient for causal discovery using reduced temporal inputs, making it suitable for real-world applications.

%% file: src/conclusions.tex
\section{Conclusions}
This paper introduces PTCD, a novel pretraining framework for generalizable time series causal discovery across diverse downstream tasks. 
To capture complex temporal dependencies, PTCD employs a dual-scale attention mechanism for both fine-grained local dynamics and long-range causal relationships, complemented by a context-conditioned Gaussian mixture to accommodate heterogeneous exogenous variable distributions flexibly. 
To handle distribution shifts, PTCD incorporates a causal mixup and intervention-based learning to achieve causal augmentation, promoting stable causal discovery and stronger generalization. 
Experiments on real-world OOD datasets show that PTCD outperforms existing methods in causal discovery and root cause identification, establishing it as a robust foundation for time series analysis.

\clearpage

%% file: src/appendix.tex
\section{Proofs}
\label{app:proofs}

\subsection{Proofs of Intervention Pretext Task}
\label{thm:identifiability}
In this section, we provide the theoretical guarantee for the effectiveness of our proposed Intervention Pretext Task. We formally prove that minimizing the intervention detection loss is equivalent to estimating the true conditional causal mechanism, thereby enabling the model to identify the true parents in the causal graph.

\begin{proof}
The Intervention Pretext Task is formulated as a binary classification problem. Let $y=1$ denote samples from the observational distribution $P_{\text{obs}}$, and $y=0$ denote samples from the interventional distribution $P_{\text{int}}$. The objective is to minimize the binary cross-entropy loss $\mathcal{L}_{do}$:
\begin{equation}
    \mathcal{L}_{do} = - \mathbb{E}_{\mathbf{x} \sim P_{\text{obs}}} [\log D(\mathbf{x})] - \mathbb{E}_{\mathbf{x} \sim P_{\text{int}}} [\log (1 - D(\mathbf{x}))],
\end{equation}
where $D(\mathbf{x})$ represents the probability that $\mathbf{x}$ is from the observational distribution.

According to the classical result in density ratio estimation~\citep{gan}, for fixed distributions $P_{\text{obs}}$ and $P_{\text{int}}$, the optimal discriminator $D^*(\mathbf{x})$ that minimizes this loss is given by:
\begin{equation}
    D^*(\mathbf{x}) = \frac{P_{\text{obs}}(\mathbf{x})}{P_{\text{obs}}(\mathbf{x}) + P_{\text{int}}(\mathbf{x})}.
    \label{eq:optimal_D}
\end{equation}

Now, consider the factorization of the joint probability based on the causal graph $G$. Let the intervention be performed on a specific variable (in window) $x_i$ at time $t$, replacing it with a value drawn from $Q(x_i)$.
The observational distribution factors according to the causal Markov condition:
\begin{equation}
    P_{\text{obs}}(\mathbf{x}) = P(x_i \mid Pa(x_i)) \cdot \prod_{j \neq i} P(x_j \mid Pa(x_j)).
\end{equation}
The interventional distribution, where the causal mechanism of $x_i$ is replaced by $Q(x_i)$ while other mechanisms remain invariant, factors as:
\begin{equation}
    P_{\text{int}}(\mathbf{x}) = Q(x_i) \cdot \prod_{j \neq i} P(x_j \mid Pa(x_j)).
\end{equation}

Substituting these factorizations into Eq. (\ref{eq:optimal_D}), the terms for all non-intervened variables $\prod_{j \neq i} P(x_j \mid Pa(x_j))$ are common to both the numerator and the denominator and thus cancel out. The optimal discriminator simplifies to:
\begin{equation}
    D^*(\mathbf{x}) = \frac{P(x_i \mid Pa(x_i))}{P(x_i \mid Pa(x_i)) + Q(x_i)} = \sigma \left( \log \frac{P(x_i \mid Pa(x_i))}{Q(x_i)} \right),
\end{equation}
where $\sigma(\cdot)$ is the sigmoid function.

Since $Q(x_i)$ is a known proposal distribution (e.g., mixture Gaussian distribution), and $\sigma(\cdot)$ is a strictly monotonic function, the output of the optimal discriminator $D^*(\mathbf{x})$ is determined solely by the true conditional probability $P(x_i \mid Pa(x_i))$.

Therefore, to minimize the loss $\mathcal{L}_{do}$, the model $D$ is forced to approximate $D^*$. This implies that the model must accurately estimate the conditional likelihood $P(x_i \mid Pa(x_i))$. An accurate estimation of this term is only possible if the model correctly identifies the true parent set $Pa(x_i)$ and models the correct functional relationship. Relying on spurious correlations would yield a suboptimal estimate of the conditional density, resulting in a higher loss. This proves that the intervention pretext task provides a sufficient signal for recovering the causal parents and discovering the underlying causal relationships.
\end{proof}

\subsection{Proofs of Time Series Causal Mixup}
\label{app:proofs_mixup}
Let $x_t \in \mathcal{X} \subseteq \mathbb{R}^d$ denote the observed system state. The dynamics are governed by a structural equation $x_{t+1} = f^*(x_t) + \epsilon$, where $f^*$ is the ground-truth mechanism. We focus on estimating the \textbf{Causal Probability Matrix}, defined as the Jacobian $G(x) := J_{f^*}(x) = \frac{\partial f^*}{\partial x} \in \mathbb{R}^{d \times d}$.

Standard mixup operates in the input space $\mathcal{X}$. However, physical observations usually lie on a low-dimensional manifold $\mathcal{M} \subset \mathcal{X}$ embedded in the high-dimensional ambient space. A convex combination $\lambda x_1 + (1-\lambda)x_2$ often falls off the manifold $\mathcal{M}$, leading to physically invalid states where the causal mechanism $f^*$ is undefined.

To resolve the geometric inconsistency, we leverage the concept of causal representation learning.

\begin{definition}[Latent Causal Embedding]
\label{def:embedding}
We assume there exists a latent space $\mathcal{S} \subseteq \mathbb{R}^k$ and an encoder function $\phi: \mathcal{X} \to \mathcal{S}$. While $\phi$ is implemented as a neural network, we assume it acts as a smooth embedding on the support of the data manifold $\mathcal{M}$. Specifically, $\phi|_{\mathcal{M}}$ is injective and preserves the topological structure of the causal mechanisms.
\end{definition}

This definition aligns with the \textit{Manifold Hypothesis}, where deep networks learn to unfold complex curved manifolds into flatter representations \cite{bengio2013representation}.

\begin{assumption}[Linearity of Mechanisms in Disentangled Space]
\label{ass:linearity}
Following the principle of Independent Causal Mechanisms (ICM) \cite{scholkopf2021toward}, we assume that in the ideal disentangled latent space $\mathcal{S}$, the dynamics $g(s)$ (where $s=\phi(x)$) are governed by simpler, smoother mechanisms compared to the ambient space. Locally, the evolution in $\mathcal{S}$ can be well-approximated by linear transitions.
\end{assumption}

\textbf{Geodesic Consistency via Causal Mixup.}
We propose performing mixup in $\mathcal{S}$. Let $s_1 = \phi(x_1)$ and $s_2 = \phi(x_2)$. The mixed state is $s^{mix} = \lambda z_1 + (1-\lambda)z_2$.

\textbf{1. Geodesic Interpretation.}
Since $\phi$ acts as a smooth embedding (Definition \ref{def:embedding}), the image of the data manifold $\phi(\mathcal{M})$ in $\mathcal{S}$ is locally homeomorphic to Euclidean space. A straight line segment in $\mathcal{S}$ defined by $\gamma_s(\lambda) = \lambda s_1 + (1-\lambda)s_2$ corresponds to a curve $\gamma_x(\lambda) = \phi^\dagger(\gamma_z(\lambda))$ in the original space $\mathcal{X}$, where $\phi^\dagger$ denotes the pseudo-inverse or decoder mapping.
Because $\phi$ unfolds the manifold, the curve $\gamma_x(\lambda)$ approximates the \textbf{geodesic} (shortest path) between $x_1$ and $x_2$ constrained to lie on $\mathcal{M}$. Unlike input mixup, which creates off-manifold samples, Causal Mixup generates valid physical states along this geodesic.

\textbf{2. Jacobian Dynamics and Chain Rule.}
Let the dynamics in the latent space be described by $s_{t+1} = g(s_t)$. The relationship between the observed causal probability $G(x)$ and the latent mechanism is given by the chain rule:
\begin{equation}
    G(x) = \frac{\partial f^*}{\partial x} = J_{\phi^\dagger}(s) \cdot \frac{\partial g}{\partial s} \cdot J_{\phi}(x)
\end{equation}
where $J_{\phi}$ and $J_{\phi^\dagger}$ are the Jacobians of the encoder and decoder, respectively.

\textbf{3. Regularization via Mixup and Jacobian Smoothing.} Causal Mixup optimizes the model by minimizing the mixup loss between the network's predicted causal probability and the linear interpolation target. The empirical risk objective can be formally defined as:
\begin{equation}
    \mathcal{L}_{mixup} = \mathbb{E}_{\lambda} \left[ \left\| \hat{G}(s^{mix}) - \left( \lambda G(x_1) + (1 - \lambda)G(x_2) \right) \right\|_F^2 \right] \quad (21)
\end{equation}
where $s^{mix} = s_2 + \lambda(s_1 - s_2)$ is the mixed latent state, $\hat{G}(\cdot)$ is the neural network predictor, and $G(\cdot)$ represents the target causal mechanism.

To rigorously quantify the implicit regularization effect of this objective, we employ Taylor's Theorem with the Lagrange remainder. By defining the perturbation direction as $\Delta s = s_1 - s_2$, the mixed state can be rewritten as $s^{mix} = s_2 + \lambda \Delta s$. Applying a first-order Taylor expansion to the network output yields:
\begin{equation}
    \hat{G}(s_2 + \lambda \Delta s) = \hat{G}(s_2) + \lambda \nabla_s \hat{G}(s_2) \Delta s + R_2
\end{equation}
where $R_2$ represents the Lagrange remainder. Minimizing the objective $\mathcal{L}_{mixup}$ forces the Taylor expansion of the network output to match the linear interpolation of the targets. This optimization process yields an implicit gradient penalty proportional to the Frobenius norm of the Jacobian, denoted as $\mathcal{O}(\|\nabla_s \hat{G}(s)\|_F^2)$, with the approximation error tightly controlled by the spectral norm of the Hessian.

Formally, if the mixup loss is bounded by a small value $\epsilon$, the Hessian norm is bounded by $\mathcal{O}(\epsilon)$. This implies that the Jacobian $\nabla_s \hat{G}(s)$ is locally constant up to an $\mathcal{O}(\epsilon)$ variation along the interpolation path. Consequently, minimizing the mixup loss inherently bounds the local Lipschitz constant of the causal mapping $\hat{G}$. Under Assumption~\ref{ass:linearity}, this mathematical equivalence ensures the global smoothness of the causal field and effectively prevents structural instability in the inferred causal graphs.

\textbf{4. Handling ReLU Networks (Piecewise Linearity).}
While we assumed smoothness, modern neural networks (MLPs with ReLU) are piecewise linear functions. The input space is partitioned into polytopes where the network is an affine transformation.
Within each polytope, $\phi$ is strictly linear, and its Jacobian $J_\phi$ is constant. Causal Mixup interpolates across these regions. By enforcing the linearity of the output $G$ with respect to the convex combination in $\mathcal{S}$, we effectively regularize the transitions across the boundaries of these linear regions, ensuring that the global causal field remains continuous and smooth.

Causal Mixup acts as a regularizer that enforces the \textit{Parallel Transport} of the causal mechanism along the data manifold. This minimizes the curvature of the learned causal field, leading to better generalization and structural identifiability compared to input-space interpolation. Therefore, time series causal Mixup not only enables a valid mixup process but also boosts the model's generalization performance.

\section{Experiments}

\subsection{Details of Datasets}
\label{apx:dataset}

\textbf{Synthetic Datasets.}
\label{apx:syn}

Referring the data generation process of RHINO~\citep{DBLP:conf/iclr/Rhino}, we generate diverse time series and causal graphs. For the synthetic data pipeline, we construct the synthetic datasets using a four-step simulation process designed to capture diverse causal mechanisms. The procedure is defined as follows:

\begin{enumerate}
    \item \textbf{Topology Generation:} We generate random Erd\H{o}s--R\'enyi (ER) or Scale-Free (SF) graphs~\citep{DBLP:journals/make/er,DBLP:journals/make/sf} to specify the underlying lagged causal relationships between variables.
    \item \textbf{Exogenous Noise Sampling:} For each time series, the exogenous variables were drawn from random Gaussian Mixture Models (GMMs) to introduce non-Gaussian complexity.
    \item \textbf{Temporal Simulation:} We sample initial starting conditions and simulate the temporal progression of the system following Eq.~\ref{equ:def} with additive noise. The functional relationships governing the dynamics were modeled using MLPs with two hidden layers (64 units each) and ReLU activation functions.
    \item \textbf{Stabilization:} A burn-in period was simulated and subsequently discarded to ensure the time series reached a stable, stationary regime.
\end{enumerate}

For dataset specifications, we generate datasets across multiple axes of variation. The specific configurations are detailed below:

\begin{itemize}
    \item \textbf{System Size and Topology:} The number of nodes varies from 5 to 40. Both ER and SF graph structures are utilized.
    \item \textbf{Sequence Length:} Each generated time series has a length of $T=100$ steps, obtained after removing an initial burn-in period of 100 steps.
    \item \textbf{Training Dataset:} The training corpus is designed to cover a wide range of structural variations. We generate 1,000 distinct causal graphs for each node count. For each causal graph, we simulate 5 independent multivariate time series. This yields a total dataset size of 180,000 time series.
    \item \textbf{Validation and Test Datasets:} The validation and test datasets each contain 370 time series, along with their corresponding ground-truth causal graphs for evaluation. Notably, these causal graphs are strictly disjoint from the training dataset.
    \item \textbf{Interventional Data:} To facilitate the learning of the intervention pretext task, we generate interventional samples for the training dataset. Specifically, we first sample a start time $t_1$ and an end time $t_2$. We then randomly resample the specific treatment variables at time $t_1$. Conditioned on this intervention, we iteratively generate the sequence over the interval $(t_1,t_2]$. Finally, we assign a binary label to each window indicating whether it contains an intervention.
\end{itemize}

\textbf{Real-world Benchmark.} 
\label{apx:real_data}
CausalRiver~\citep{DBLP:conf/iclr/causalriver} treats river flow measurements from monitoring stations as a time series dataset with inherent causal structure. It covers two regions in Germany: the eastern German territory (666 measuring stations) and Bavaria (494 stations). The dataset spans from 2019 to 2023 with a temporal resolution of 15 minutes. Importantly, CausalRiver includes both normal hydrological conditions and extreme events such as heavy rainfall and large-scale precipitation, enabling the study of causal dynamics under diverse environmental scenarios. The dataset is categorized into five types based on the characteristics of their underlying causal structures:
\begin{itemize}
    \item Random: All connected subgraphs with three or five nodes, covering the entire dataset and full diversity of benchmark conditions.
    \item Close: Connected subgraphs whose edges have a maximum geographic (Euclidean) distance of 0.3; by excluding long-range connections, causal effects are expected to be more pronounced. This set is fully contained within Random.
    \item Random + 1: Connected subgraphs with two or four nodes, combined with one additional isolated node. To avoid confounding, the isolated nodes are drawn from coastal or border regions where disconnected nodes naturally occur.
    \item Root cause: Connected subgraphs with three or five nodes in which each node has at most one parent, forming chain-like structures. This setting is useful for root-cause analysis~\citep{DBLP:conf/nips/RCD} and is fully contained within Random.
    \item Confounder: Subgraphs with four or six nodes containing a single node with multiple children (rarely observed in cases such as river splits). The multi-child node is then removed to simulate permanent hidden confounding.
    
\end{itemize}

\textbf{Root Cause Identification Datasets.} 
SWaT~\citep{DBLP:conf/cpsweek/swat} is a dataset collected from a testbed that simulates a real-world water treatment plant. It comprises data from 51 sensors in the critical infrastructure system during continuous operation, including both normal operating conditions and attack scenarios within the water treatment process.
MSDS (Multi-Source Distributed System)~\citep{DBLP:conf/esocc/msds} is developed on an OpenStack testbed and serves as a dataset for AIOps (Artificial Intelligence for IT Operations). Instances of fault injections in this system are labeled as anomalies. More detail are provided in Table~\ref{tab:dataset_rc}.

\begin{table}[h]
\centering
\caption{Details of Root Cause Identification Datasets.}
\resizebox{0.98\textwidth}{!}{
\begin{tabular}{c|c|c|c|c}
\toprule
\textbf{Dataset} & \textbf{Training Time Steps} & \textbf{Test Sequences} & \textbf{Avg. Sequence Length} & \textbf{Avg. \# of Root Causes} \\
\midrule
SWaT (51)        & 49,500                        & 20                      & 51                            & 13.35                           \\
\midrule
MSDS (10)        & 29,268                        & 4,255                   & 21                            & 3.05                            \\
\bottomrule
\end{tabular}
}
\label{tab:dataset_rc}
\end{table}

\subsection{More Details of Baselines}
We provide detailed descriptions of the baseline methods used for comparison in both Causal Discovery and Root Cause Identification tasks.

\textbf{Causal Discovery Baselines.} 
We evaluate representative methods spanning classical statistical approaches and modern deep learning paradigms. 

\begin{itemize}
\item \textbf{PCMCI}~\citep{PCMCI}: A constraint-based algorithm optimized for time-series data. It first selects preliminary parents using PC-algorithm-based selection and then estimates the causal graph using momentary conditional independence tests to control the false positive rate. It effectively handles time lags and strong autocorrelation.

\item \textbf{VarLiNGAM}~\citep{DBLP:journals/jmlr/Varlingam}: A linear non-Gaussian acyclic model extended for autoregressive processes. It assumes the data generating process is linear with non-Gaussian noise and utilizes independent component analysis to identify the causal structure and time-lagged causal effects.

\item \textbf{DYNOTEARS}~\citep{DBLP:conf/aistats/Dynotears}: A score-based method that formulates structural learning as a continuous optimization problem. It extends the NOTEARS framework to dynamic causal networks by minimizing a penalized loss function subject to acyclicity constraints, allowing for the simultaneous learning of intra-slice and inter-slice dependencies.

\item \textbf{VAR}~\citep{DBLP:conf/ijcai/VAR}: A classic statistical model that captures the linear interdependencies among multiple time series. In the context of causal discovery, Granger Causality is inferred based on the non-zero coefficients of the fitted autoregressive matrices.

\item \textbf{CDMI}~\citep{CDMI}: A method based on Causal Discovery with Mutual Information. It estimates the causal probability between time series variables by quantifying the mutual information shared between past values of potential parents and current values of the target, often utilizing non-parametric estimators to capture non-linear dependencies.

\item \textbf{TCDF}~\citep{DBLP:journals/make/tcdf}: A deep learning framework that uses attention-based CNNs. It learns a predictive model for each time series and interprets the internal attention parameters and variable importance scores to discover causal links and time delays.

\item \textbf{CUTS+}~\citep{cuts+}: An improved version of the CUTS framework designed for non-stationary time series. It employs a latent causal discovery approach where a Granger-causality-inspired graph is learned jointly with a data augmentation strategy to handle distribution shifts and improve robustness.

\item \textbf{Causal Pretraining (CP)}~\citep{DBLP:journals/corr/cp}: A self-supervised pretraining framework designed to learn causal representations from time series. It reconstructs masked sub-series to capture temporal dependencies. To evaluate the impact of the backbone architecture, we compare two variants:
\begin{itemize}
    \item \textbf{CP-Trans}: Implements the standard CP framework using a Transformer-based backbone to capture long-range dependencies via self-attention mechanisms.
    \item \textbf{CP-GRU}: Implements the CP framework using a gated recurrent unit backbone, focusing on sequential recurrent modeling.
\end{itemize}
\end{itemize}

\textbf{Root Cause Identification Baselines.} 
To evaluate the effectiveness of PTCD in identifying root causes of anomalies, we compare it with four established methods:
\begin{itemize}
\item $\epsilon$-\textbf{Diagnosis}~\citep{DBLP:conf/www/-Diagnosis}: A statistical approach that locates root causes by analyzing the correlation changes between variables. It performs pairwise significance tests (e.g., standard deviation and correlation shifts) between normal and abnormal periods to construct an anomaly propagation graph and rank potential root causes.

\item \textbf{RCD}~\citep{DBLP:conf/nips/RCD}: A framework that integrates causal structure learning with anomaly attribution. It learns a partial causal graph from data and analyzes the invariance of conditional distributions. Nodes corresponding to intervention targets are identified as root causes.

\item \textbf{CIRCA}~\citep{DBLP:conf/kdd/CIRCA}: A method that leverages domain knowledge to construct structural causal graphs. It identifies root causes by modeling the residuals of causal mechanisms; specifically, it detects nodes that exhibit significant distribution shifts in their conditional probabilities during the anomaly window.

\item \textbf{AERCA}~\citep{DBLP:conf/iclr/aerca}: A reconstruction-based method that models causal dependencies and exogenous variables using an Autoencoder framework. It posits that anomalies are generated by perturbations in exogenous factors. AERCA attributes root causes by measuring the magnitude of the reconstruction error associated with the exogenous noise terms of each variable.

\end{itemize}

\subsection{More Details of Metrics on Root Cause Identification}
\label{apx:details_metrics}
Given a multivariate time series $\mathcal{X}$, the $Recall@K$ is defined as:
\begin{equation}
    Recall@K=\frac{1}{|\mathcal{X}|} \sum_{x_i \in \mathcal{X}} \frac{\left|V_{x_i}^{(RC)} \cap\left\{R_{x_i}[k] \mid k=1,2, \ldots K\right\}\right|}{\min \left(K,\left|V_{x_i}^{(R C)}\right|\right)},
\end{equation}
where $R_{x_i}[k]$ indicates the time series at the $k$-th rank for the channel $x_i$, and $V^{(RC)}_{x_i}$ indicates a set of root cause variables over the whole channel $x_i$. Note that if a time series receives multiple exogenous interventions, it only counts as one root cause time series in $V^{(RC)}_{x_i}$. We further compute the overall performance by computing the average $Recall@K$, denoted as $Avg@K = \frac{1}{K}\sum_{k=1}^K Recall@k$.

\subsection{Implementation Details}
\label{app:exp_details}
We first pretrain PTCD using the Adam optimizer~\citep{DBLP:journals/corr/adam} with a learning rate of $10^{-3}$. We apply early stopping with a patience of 30 epochs to prevent overfitting. The loss weights $\lambda_{G}$, $\lambda_{KL}$ and $\lambda_{do}$ are set to 0.6. The model consists of 4 alternating attention blocks, and the number of Gaussian components $K$ is set to 10. For fine-tuning in both the causal discovery task and the root cause identification task, we use a learning rate of $10^{-4}$, and keep $\lambda_{KL}$ fixed at 0.6. All experiments are implemented in PyTorch and conducted on an NVIDIA A800 80GB GPU. Following AERCA~\citep{DBLP:conf/iclr/aerca}, data preprocessing is standardized across datasets using a MinMax scaler. To improve computational efficiency, we downsample the SWaT dataset every 10 seconds and the MSDS dataset every 5 time steps. The code and datasets are available at \href{}{https://anonymous.4open.science/r/PTCD-D4E3}.

\subsection{More Ablation Study}
\label{apx:ab_rc}
To investigate the impact of different components on root cause identification, we conduct an ablation study focusing on inter-window attention, intra-window attention, mixture-of-Gaussians modeling of exogenous variables, causal mixup, and intervention. As shown in Table~\ref{tab:ablation_rc}, we find that the intervention mechanism achieves superior performance in identifying root causes. This gain is attributed to the fact that interventions provide the model with information across diverse environments, enabling it to filter out spurious correlations, learn more accurate causal relationships, and enhance overall robustness. Furthermore, when the mixture-of-Gaussians constraint on exogenous variables is removed, PTCD exhibits a notable drop in performance, demonstrating the importance of instance-specific exogenous variable estimation for effective root cause identification.

\begin{table}[h]
\centering
\caption{Ablation studies on Root Cause Localization. w/o inter, w/o intra, w/o exogenous, w/o causal mixup, w/o intervention represent removing the inter-window attention, intra-window attention, exogenous, causal mixup, and intervention, respectively.}
\resizebox{0.95\textwidth}{!}{
\begin{tabular}{c|c|ccccc}
\toprule
Dataset               & Model                & Recall@1                 & Recall@3        & Recall@5                 & Recall@10                & Avg@10               \\
\midrule
\multirow{5}{*}{MSDS} & w/o inter & 0.452 & 0.792 & 0.910          & \textbf{1.000} & 0.492         \\
                      & w/o intra         & 0.498          & 0.801 & 0.910 & \textbf{1.000} & 0.835         \\
                      & w/o exogenous       & 0.227          & 0.697 & 0.893          & \textbf{1.000}          & 0.694          \\
                      & w/o causal mixup                & 0.500          & 0.792 & 0.993         & \textbf{1.000}          & 0.842          \\
                      & w/o intervention                & 0.296 & 0.798 & 0.900          & \textbf{1.000}         & 0.728          \\
                      & PTCD        &       \textbf{0.515}               &      \textbf{0.993}       &    \textbf{0.996}                 &             \textbf{1.000}         &     \textbf{0.929}     \\
\midrule
\multirow{5}{*}{SWaT} & w/o inter          & 0.200          & 0.350 & 0.350          & 0.455 &     0.351      \\
                      & w/o intra         & 0.250          & 0.375 & 0.375          & 0.450          &      0.360      \\
                      & w/o exogenous       & 0.100          & 0.150 & 0.250         & 0.350          &    0.205       \\
                      & w/o causal mixup       & 0.250 & 0.330 & 0.450 & 0.455          &   0.342    \\   
                      & w/o intervention       & 0.150          & 0.220 & 0.330          & 0.375          &    0.252      \\
                      & PTCD        &       \textbf{0.300}               &      \textbf{0.450}       &    \textbf{0.450}                  &             \textbf{0.475}         &   \textbf{0.440}  \\
\bottomrule
\end{tabular}
}
\label{tab:ablation_rc}
\end{table}

\subsection{Efficiency Analysis}
\begin{table}[h]
\centering
\caption{Efficiency comparison across different paradigms on the causalriver dataset.}
\resizebox{\textwidth}{!}{
\begin{tabular}{llccccc}
\hline
Model & Paradigm & Params (K) & Inference Time & GPU Mem (MB) & CPU Mem (MB) & \textbf{avg. AUROC} \\
\hline
TCDF & End-to-End Causal & 0.12 & 17.19 s & 2.00 & 460.45 & 0.57 \\
CUTS+ & End-to-End Causal & 8.23 & 14.19 s & 2.00 & 293.36 & 0.56 \\
\textbf{PTCD (zero-shot)} & Pre-trained Causal & 877.24 & 0.01 s & 1236.00 & 504.08 & 0.74 \\
\textbf{PTCD (fine-tune)} & Pre-trained Causal & 877.24 & 3.59 s & 1236.00 & 504.08 & \textbf{0.85} \\
\hline
\end{tabular}
}
\label{tab:eff}
\end{table}
To provide a comprehensive efficiency profile, we evaluate the parameters, inference time, and memory usage across different paradigms. As shown in Table~\ref{tab:eff}, although PTCD introduces a moderate parameter overhead due to pre-training, its fine-tuned variant achieves the best overall performance, at the expense of increased inference time compared to the zero-shot setting. This demonstrates a favorable trade-off: zero-shot PTCD offers exceptional efficiency, while the fine-tuned variant delivers superior predictive accuracy, all while retaining rapid inference capabilities.

\subsection{Parameter Sensitivity Analysis}
We conduct a hyperparameter sensitivity analysis with respect to the loss weights $\lambda_G$, $\lambda_{KL}$, and $\lambda_{do}$, which correspond to the causal graph loss, the KL-divergence loss, and the intervened loss, respectively, as well as the number of Gaussian components $K$. All experiments are performed on synthetic datasets. As shown in Figure~\ref{apx:parameter_sensitivity}, PTCD achieves optimal causal discovery performance when $\lambda_G$, $\lambda_{KL}$, and $\lambda_{do}$ are set to $0.6$. This indicates that causal graph prediction, the modeling of exogenous variables, and intervention mechanisms play a significant role in the generalization capability of the learned causal discovery. Additionally, the number of Gaussian components also influences the model's generalization performance to a certain degree.

\begin{figure}[h]
\
  \centering
  \resizebox{0.80\textwidth}{!}{%
      \begin{minipage}{\textwidth}
          \centering
      \begin{subfigure}[b]{0.48\textwidth}
        \centering
        \includegraphics[width=\linewidth]{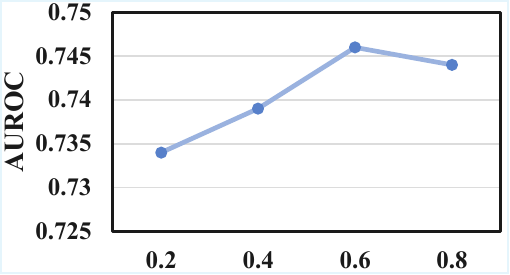}
        \caption{The causal graph loss weight $\lambda_G$.}
      \end{subfigure}\hfill
      \begin{subfigure}[b]{0.48\textwidth}
        \centering
        \includegraphics[width=\linewidth]{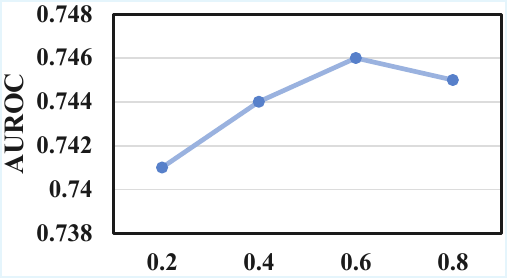}
        \caption{The KL divergence loss weight $\lambda_{KL}$.}
      \end{subfigure}\hfill
      \begin{subfigure}[b]{0.48\textwidth}
        \centering
        \includegraphics[width=\linewidth]{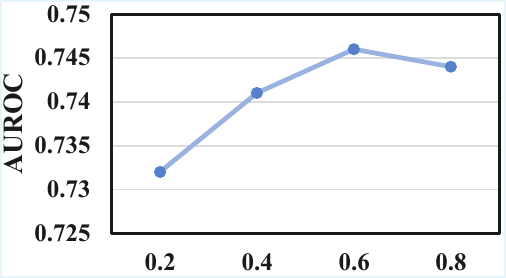}
        \caption{The intervention loss weight $\lambda_{do}$.}
      \end{subfigure}\hfill
      \begin{subfigure}[b]{0.48\textwidth}
        \centering
        \includegraphics[width=\linewidth]{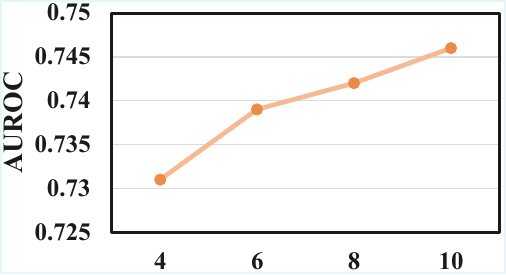}
        \caption{The number of Gaussian components $K$.}
      \end{subfigure}\hfill
      \caption{The effect of parameter $\lambda_G$, $\lambda_{KL}$, $\lambda_{do}$, and $K$ on synthetic datasets for time-series causal discovery.}
      \label{apx:parameter_sensitivity}
  \end{minipage}
  
}
\end{figure} 

\section{Limitations}
\label{app_limit}
While PTCD demonstrates strong generalization capabilities in time series causal discovery and root cause identification, it has a few limitations regarding its structural assumptions. The SCM assumes an Additive Noise Model (ANM). Although ANM is theoretically advantageous for guaranteeing causal identifiability and cleanly decoupling endogenous dynamics from exogenous anomalies for precise root cause localization, it may not capture complex, non-additive external disturbances (e.g., multiplicative noise or nonlinear entanglement) present in reality. Future work will explore flexible non-additive causal mechanisms to further enhance robustness against extreme out-of-distribution (OOD) noise.